  \newcommand{\R}{\ensuremath{\mathbb{R}}}
  \newcommand{\Nc}{\mathcal{N}}
\newcommand{\Pd}[3]{\ifthenelse{\equal{#3}{1}}{\frac{\partial #1}{\partial #2}}{\frac{\partial^{#3} #1}{\partial #2^{#3}}}}
\newtheorem{Lemma}{Theorem}
\newtheorem{lemma}[Lemma]{Lemma}
\newcommand{\network}{\V{h}}
\newcommand{\observation}{\V{f}}
\newcommand{\encoder}{{\V{\rho}}}
\newcommand{\decoder}{{\V{\gamma}}}
\newcommand {\defeq}{\triangleq}
\newcommand{\dmaps}{\V{\psi}}
\newcommand{\admaps}{\V{\phi}}
\newcommand{\wifiTransmissionDistance}{600}
\newcommand{\wifiAccessPointCountNumeric}{17}
\newcommand{\numSensorReadings}{4,000}
\newcommand{\sensorArrayRadiusPixels}{0.5}
\newcommand{\figref}[1]{Fig. \ref{#1}}
\newcommand{\vast}{\bBigg@{4}}
\newcommand{\Vast}{\bBigg@{5}}
\newcommand {\V}[1] {{\mbox{\boldmath $#1$}}}
\title{LOCA: LOcal Conformal Autoencoder \\
for standardized data coordinates}
\author{Erez Peterfreund $^{1 \ast }$ \and Ofir Lindenbaum $^{2 \ast }$ \and Felix Dietrich $^{3}$ \and Tom Bertalan$^3$ \and Matan Gavish$^1$ \and Ioannis G. Kevrekidis$^3$ \and Ronald R. Coifman$^{2, \dagger}$\\
\\
\normalsize{$^{1}$Hebrew University of Jerusalem;} 
\normalsize{$^{2}$Yale University;}\\
\normalsize{$^{3}$Johns Hopkins University;}\\
\normalsize{$^\dagger$Corresponding author. E-mail: ronald-coifman@yale.edu}\\
\normalsize{$^\ast$ These authors contributed equally.}
}
\date{}
\begin{document}

\maketitle
\begin{abstract}
 We propose a deep-learning based method for obtaining standardized data coordinates from scientific measurements.
 Data observations are modeled as samples from an unknown, non-linear deformation of an underlying Riemannian manifold, which is parametrized by a few normalized latent variables. By leveraging a repeated measurement sampling strategy, we present a method for learning an embedding in $\mathbb{R}^d$ that is isometric to the latent variables of the manifold. 
 These data coordinates, being invariant under smooth changes of variables, enable matching between different instrumental observations of the same phenomenon. Our embedding is obtained using a LOcal Conformal Autoencoder (LOCA), an algorithm that constructs an embedding to rectify deformations by using a local z-scoring procedure while preserving relevant geometric information. We demonstrate the isometric embedding properties of LOCA on various model settings and observe that it exhibits promising interpolation and extrapolation capabilities. Finally, we apply LOCA to single-site Wi-Fi localization data, and to $3$-dimensional curved surface estimation based on a $2$-dimensional projection.
\end{abstract}

\section{Introduction}
Reliable, standardized tools for analyzing complex measurements are crucial for science in the data era. Experimental data often consist of multivariate observations of a physical object that can be represented as an unknown Riemannian manifold. A key challenge in data analysis involves converting the observations into a meaningful and, hopefully, intrinsic parametrization of this manifold. For example, in astrophysics, one is interested in a representation that is coherent with the material composition of stars based on measurable high dimensional spectroscopic data \cite{sdss1,sdss2}. This type of challenge has typically been studied under the broader umbrella of dimensionality reduction and manifold learning, where numerous algorithmic solutions have been proposed \cite{PCA,MDS,Isomap,Leig,Dmaps,LLE,Auto-e,tsne,umap,perraul2013non}. These methods rely on statistical or geometrical assumptions and aim to reduce the dimension while preserving different affinities of the observed high dimensional data.  

In this paper, we focus on data obtained from several observation modalities measuring a complex system. These observations are assumed to lie on a path-connected manifold which is parameterized by a small number of latent variables. We assume that the measurements are obtained via an unknown nonlinear measurement function observing the inaccessible manifold. The task is then to invert the unknown measurement function, so as to find a representation that provides a standardized parametrization of the manifold. In general, this form of blind inverse problem may not be feasible. Fortunately, in many cases, one can exploit a localized measurement strategy, suggested in \cite{nonlinearICA}, to extract an embedding into internally standardized (z-scored) latent variables.

Toward a useful formulation of our problem we note that, in numerous real-world scenarios, it is possible to capture data using a localized ``burst'' sampling strategy \cite{rabin2012heterogeneous,ronen_localization,kushnir2012anisotropic,ronen_eeg,crack_loc,ronen_intrinsicIsomap,crack_loc,Amit2,shevchik20193d,talmon2012differential}. To motivate this type of ``burst'' sampling, we describe a toy experiment (see \figref{fig:motivation}). Consider the task of recovering the geometry of a curved $2$-dimensional homogeneous surface in three dimensions using a laser beam, which heats the surface locally at several positions. Here, a ``burst'' is realized through the brief local isotropic propagation of heat around each laser impact location (each ``data point"), which can be visualized as a local ellipse by a thermal camera. Now, the task is to recover the curved geometry of the surface in three dimensions using the collection of observed local $2$-dimensional ellipses. 

More generally, our strategy is realized by measuring such brief ``bursts'', which are modeled as local isotropic perturbations added to each state in the inaccessible latent manifold. The ``bursts'' provide information on the local variability in a neighborhood of each data point. Thus, they can be used to estimate the Jacobian (modulo an orthogonal transformation, as we will discuss) of the unknown measurement function. The authors in \cite{nonlinearICA} use such ``bursts'' and suggest a scheme to recover a representation that is invariant to the unknown transformation. Specifically, they use a local Mahalanobis metric, combined with eigenvectors of an Anisotropic Laplacian, in order to extract a desired embedding.

Solutions such as \cite{nonlinearICA} and extensions such as \cite{talmon2013empirical,Amit2,carmelin,felix,Multiview} can be used. There remain, however, several challenges: (i) they require a dense sampling of the deformed manifold. (ii) they deform the representation due to inherent boundary effects, and (iii) they do not extend easily to unseen samples. To overcome these limitations, we introduce the concept of a {\em LOcal Conformal Autoencoder} (LOCA), a deep-learning based algorithm specifically suited to ``burst'' measurements. LOCA is realized using an encoder-decoder pair, where the encoder attempts to find a mapping such that each ``burst'' is locally whitened (z-scored). By forcing reconstruction, the decoder ensures that geometric information has not been lost. We have found LOCA to be scalable, as well as easy to implement and parallelize using the existing deep learning open-source codebase. We provide empirical evidence that the LOCA embedding is approximately isometric to the latent manifold and extrapolates reliably to unseen samples. We discuss a scheme to automatically tune the minimal embedding dimension of LOCA, and demonstrate its precision in two real data problems. 

The contributions in this paper are as follows. (i) We show that the localized sampling strategy (our ``bursts'' at a given scale) generates a consistent Riemannian structure; under certain conditions, it allows inverting the unknown measurement function (modulo a shift and orthogonal transformation). 
(ii) We present a two-step optimization scheme for learning a parametric mapping, which is approximately an isometry of the latent manifold. 
(iii) We demonstrate the isometric properties of the extracted encoder on several examples.
(iv) We verify empirically that the extracted neural network has good interpolation and extrapolation properties.

\begin{figure}
    \centering
    \includegraphics[width=0.6\textwidth]{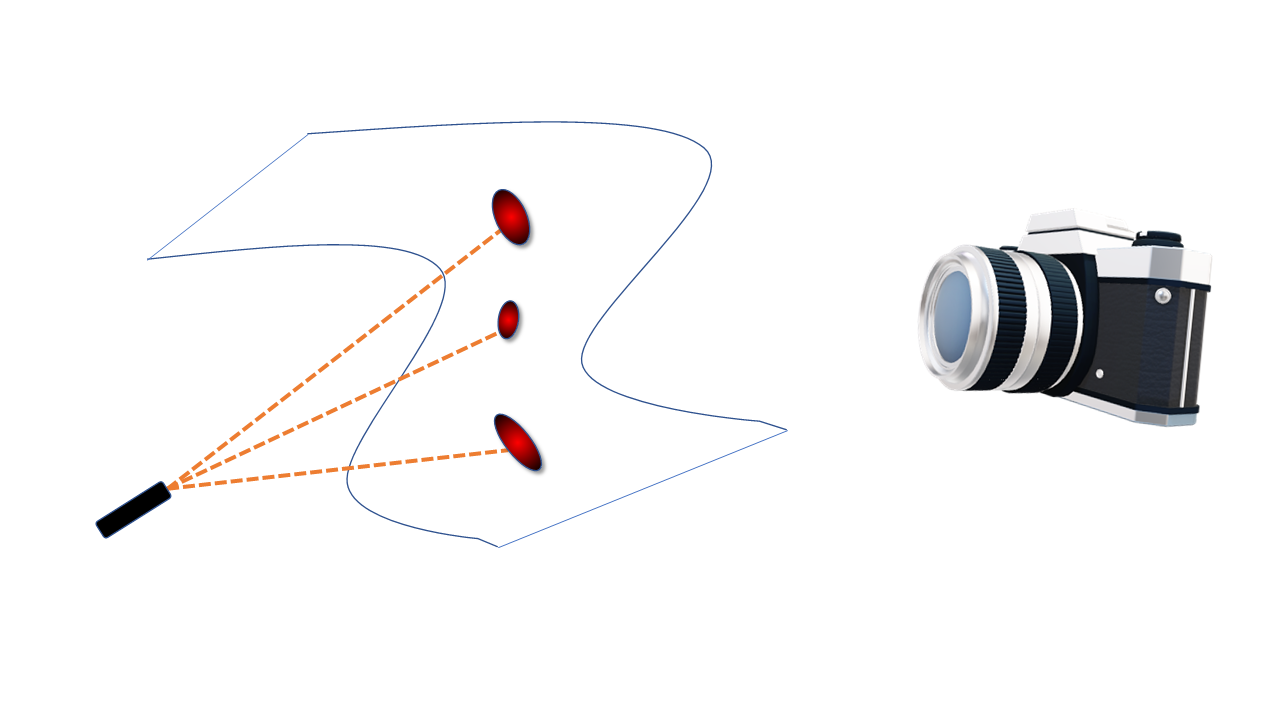}
    \caption{A motivating example for LOCA- learning the stereographic shape of a surface. We consider a laser beam used to locally heat the surface at several positions. A thermal camera measures the brief isotropic propagation of heat around each location.  By scanning a thermal image, we can identify the neighborhoods of each position, which we define as our ``bursts". LOCA uses these ``bursts" to invert the projection and recover a scaled version of the curved surface.}
    \label{fig:motivation}
\end{figure}


\section{Problem Settings}
\label{sec:problem settings}

\textbf{The burst measurement strategy.} 

Consider first, for simplicity, the case where the {\em latent} domain for our system of interest is a path-connected domain in a Euclidean space $\mathcal{X}=\mathbb{R}^d$. We call $\mathcal{X}$ the {\em latent} space. Observations of the system consist of samples captured by a  measurement device given as a non-linear function $f:\mathcal{X}\rightarrow \mathcal{Y}$, where $\mathcal{Y}$ is the ambient, or ``measurement'' space. Even if $\observation$ is invertible, it is generally not feasible to identify $\observation^{-1}$ without access to $\mathcal{X}$. Here, we assume that (a) $\observation$ is smooth and injective; and that (b) multiple, slightly perturbed versions of the physical system point in $\mathcal{X}$ give rise to multiple (slightly perturbed) measurements in $\mathcal{Y}$. In this notation, by exploiting a specific type of local perturbation, we develop a method to recover a standardized version $\mathcal{X}$ from $\mathcal{Y}$  (up to an approximately isometric transformation which, for Euclidean spaces would be a rigid transformation).

Consider $N$ data points (``burst" centers), denoted $\V{x}_1,\ldots,\V{x}_N\in \mathbb{R}^d$ in the latent space. Assume that all these points lie on a path-connected, $d$-dimensional sub-domain of $\mathcal{X}$; we will later discuss the restriction to Riemannian manifolds with a smaller dimension than the full space. Importantly, we do not have direct access to these states. The states are pushed forward to the ambient space via the unknown deformation which defines $\V{y}_1,\ldots \V{y}_N\in \mathbb{R}^D$. We do not only observe the states $\V{y}_1,\ldots,\V{y}_N$; we rather assume a specific perturbed sampling strategy. For each $\V{x}_i$, $i=1,...,N$, we assume that the perturbed observed state is given as the random variable  
\begin{eqnarray}
\label{eq:ambient_clouds}
\V{Y}_{i}= \observation(\V{x}_i+\V{Z}_i)\in \mathbb{R}^D \quad i=1,\ldots,N,
\end{eqnarray}
where $\V{Z}_1,\ldots,\V{Z}_n$ are i.i.d distributed by $\mathcal{N}_d(\V{0},\sigma^2 \V{I}_d)$. Our sampling strategy relies on measuring $M$ perturbed realizations of $\V{Y}_i$, which we denote as $\V{y}^{(j)}_i,j=1,...,M$. We assume that $\sigma \ll 1$, or alternatively, that $\sigma$ is sufficiently small such that the differential of $\observation$ practically does not change in a ball of radius $\sigma$ around any point. Such sufficiently small $\sigma$ allows us to capture the local neighborhoods of the states {\em at this measurement scale} on the latent manifold. Note that alternative isotropic distributions, which satisfy this condition, could be used to model $\V{Z}$. 

Let us explore the implications of this localized sampling strategy for learning a representation that is consistent with $\mathcal{X}$. Specifically, our goal is to construct an embedding $\encoder$ that maps the observations $\V{y}_i$, so that the image of $\rho\circ \observation$ is isometric to $\mathcal{X}$ when $\sigma$ is known. In our Euclidean setting, such an isometric embedding should satisfy 
\begin{eqnarray}
\label{eq:preserve_dists}
\| \encoder(\V{y}_i)-\encoder(\V{y}_j) \|_2 = \| \V{x}_i-\V{x}_j \|_2, \text{ for any } i,j=1,\ldots,N.
\end{eqnarray}
We note that if $\sigma$ is not known, we will relax \eqref{eq:preserve_dists} by allowing a global scaling of the embedding. This means that we are only looking for a representation that preserves the pairwise Euclidean distances between the latent samples, rather than obtaining their actual values. More specifically, a $\encoder$ that satisfies \eqref{eq:preserve_dists} is not unique, and is defined up to an isometric transformation of the data. We refer to representations which satisfy Eq. \ref{eq:preserve_dists} up to errors smaller than $\sigma$ as ``isometries".

\section{Related work}
The problem of finding an isometric embedding was also studied in \cite{mcqueen2016nearly}. The paper proposes an algorithm to embed a manifold with dimension $d$ into a space of dimension $s \geq d$. The method in \cite{mcqueen2016nearly} is built upon \cite{perraul2013non} and uses a discrete version of the Laplace-Beltrami operator, as in \cite{Dmaps}, to estimate metric of the desired embedding. To force the embedding to be isometric to the observed samples $\mathcal{Y}$, the authors propose a loss term to quantify the deviation between the push-forward metric from the observed space, $\mathcal{Y}$, to the embedding space compared with the restricted Euclidean metric. The embedding is refined by applying gradient descent to the proposed loss. The approach successively approximates a Nash embedding with respect to the observed space, $\mathcal{Y}$, for which it is required that the manifold be densely sampled at all scales.

In this work, we use ``bursts'' to learn an embedding that corrects the deformation $\observation$ and isometrically represent the inaccessible manifold $\mathcal{X}$. The idea of using ``bursts'', or data neighborhoods, to learn a standardized reparametrization of data was first suggested in \cite{nonlinearICA}. The authors assume the data is obtained via some unknown nonlinear transformation of latent independent variables. Locally, the distortion caused by the transformation is corrected by inverting a Jacobian that is estimated from the covariances of the observed ``bursts''. This allows the authors to define a {\em local} Mahalanobis metric (which is affine invariant). Then, this metric is used to construct an anisotropic "intrinsic Laplacian" operator. Finally, the eigenvectors of this Laplacian provide the independent components and are used as a ``canonical" embedding. This framework was extended in several studies such as \cite{carmelin,ronen_localization,talmon2013empirical,Multiview}.

The work of these authors can be improved in three directions. First, they require inverting a covariance matrix in the ambient space. Second, they suffer from deformation on boundaries. Third, they typically do not provide an embedding function that can be naturally extended over the entire data domain and beyond; instead, they provide a specific mapping for the existing training samples. This last direction means that to embed test data, methods such as \cite{coifman2006geometric,rabin2012heterogeneous} could be employed. The mapping approximations based on these methods are limited and cannot extend further than a small neighborhood around each training point. Furthermore, even though the provided embedding is unique, it is not isometric to the latent variables. We present a method that alleviates these shortcomings, and empirically demonstrate that it extracts a canonical representation that is isometric to the latent variables.

Perhaps the most related work to this study was recently presented by \cite{ronen_intrinsicIsomap}. The authors consider ``bursts'' to develop a method for finding an embedding that is isometric to the latent variables. They build upon Isomap \cite{tenenbaum2000global}, and use two neural networks to refine the Isomap based embedding. The first neural network is used in order to obtain a continuous model for estimating the covariance $C(\V{Y}_i)$. The covariances are used for calculating local Mahalanobis based distances, which are fed into Isomap to obtain an initial embedding. Next, they train an additional neural network to correct the Isomap embedding so that the Euclidean distances will approximate the local Mahalanobis distances. In this paper, we take a different and, we believe, more systematic/general approach by presenting a simple encoder-decoder pair (see Fig. \ref{fig:loca illustration}) that is directly applicable to samples in the observed high dimensional space. Specifically, our approach provides a parametric mapping that allows us to extend the embedding to new unseen samples naturally. Furthermore, we learn the inverse mapping, which could be used to generate new samples by interpolating in the latent space.

\begin{figure}[t]
    \centering
    \includegraphics[width=0.6\textwidth]{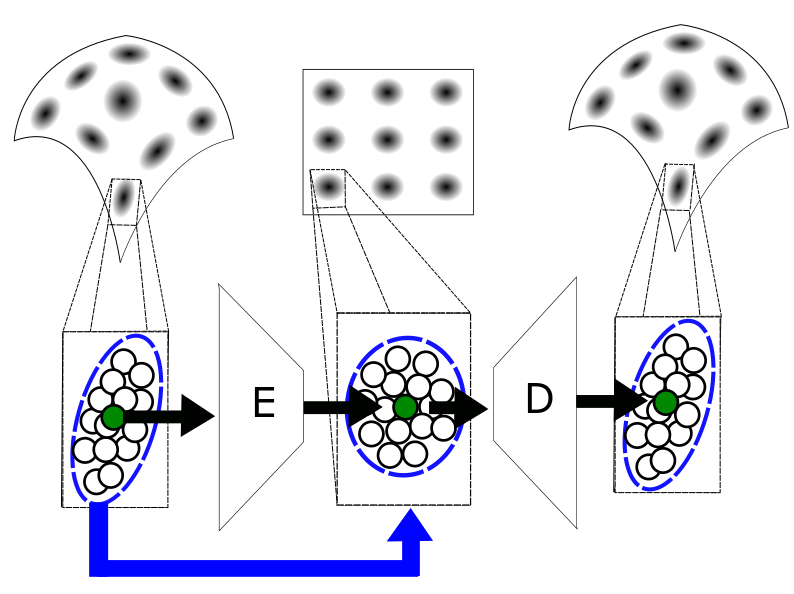}
    \caption{An illustration of the LOcal Conformal Autoencoder (LOCA). ``E'' stands for the encoder $\encoder$, and ``D'' for the decoder $\decoder$. The autoencoder receives a set of points {\em along with corresponding ``neighborhoods''}; each neighborhood is depicted as a dark oval point cloud (see the top row in the figure). On the bottom row, we zoom in onto a single ``anchor'' point $\V{y}_i$ (green) along with its corresponding neighborhood $\V{Y}_i$ (bounded by a blue ellipsoid). The encoder attempts to ``whiten'' each neighborhood in the embedding space, while the decoder tries to reconstruct the input.}
    \label{fig:loca illustration}
\end{figure}

\section{Deriving an Alternative Isometry Objective}
Without access to samples from $\mathcal{X}$ the objective described in  \eqref{eq:preserve_dists} does not provide any information for extracting $\encoder$. Here we reformulate this objective by utilizing the special stochastic sampling scheme presented in Section \ref{sec:problem settings} and relate it to the differential equation for the embedding described in Lemma~\ref{lem:cov_jacobian}. We start by plugging the unknown measurement function into  \eqref{eq:preserve_dists}; then, we can approximate its left hand side using a first order Taylor expansion
\begin{eqnarray*}
\| \encoder(\V{y}_i)-\encoder(\V{y}_j) \|_2 &=& 
\| \encoder\circ \observation (\V{x}_i)-\encoder\circ \observation(\V{x}_j) \|_2\\
& \approx &
\| \V{J}_{\encoder\circ \observation} (\V{x}_i) \left(\V{x}_j- \V{x}_i \right) \|_2.
\end{eqnarray*}
Hence, by neglecting higher order terms, we can define the following objective 
\begin{eqnarray}
\label{eq:derive alternative objective 1}
\V{J}_{\encoder\circ \observation} (\V{x}_i) ^T
\V{J}_ {\encoder\circ \observation} (\V{x}_i) = \V{I}_d,\text{ for } i=1,
\ldots,N,
\end{eqnarray}
which allows us to evaluate the isometric property of  $\encoder$.

Now we want to relate the Jacobian in Eq. \ref{eq:derive alternative objective 1} to measurable properties of the observations $\V{Y}_1,\ldots,\V{Y}_N$. Specifically, we can rely on the following Lemma to approximate the derivatives of the unknown function $\observation$ at each point $\V{x}_1,\ldots,\V{x}_N$. The Lemma is proved in the Supporting Information (\ref{sec:proof of lemma}):
\begin{lemma}
\label{lem:cov_jacobian}
Let $\V{g}:\mathcal{X} \rightarrow \mathcal{Z}$ be a function, where $\mathcal{X}=\mathbb{R}^d$ and $\mathcal{Z}=\mathbb{R}^D$. 
Let $\V{x}\in \mathcal{X}$ and $\sigma\in \mathbb{R}_{+}$. Define a random variable $\V{X}\sim \mathcal{N}(\V{x},\sigma^2 \V{I}_d)$. 
If the function satisfies $\V{g}\in \mathcal{C}^3$ and is injective, there exist a $\sigma \in \mathbb{R}_{+}$ such that the covariance of the transformed random variable $\V{Z}=\V{g}(\V{X})$ is related to the Jacobian of $\V{g}$ at $\V{x}$ via
\begin{eqnarray*}
\V{J_g} (\V{x}) \V{J_g} (\V{x})^{T} &=& \frac{1}{\sigma^2}\V{C}(\V{Z})   + O(\sigma^2).
\end{eqnarray*}  
Moreover,
\begin{eqnarray*}
\frac{1}{\sigma^2} \V{C}(\V{Z}) \underset{\sigma\rightarrow 0}{\longrightarrow} \V{J_{g}} (\V{x}_i) \V{J_{g}} (\V{x})^T .
\end{eqnarray*}
\end{lemma}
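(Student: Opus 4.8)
The plan is to reduce the statement to a Taylor expansion of $\V{g}$ about $\V{x}$ combined with the moment structure of the Gaussian perturbation. First I would write $\V{X}=\V{x}+\sigma\V{W}$ with $\V{W}\sim\mathcal{N}(\V{0},\V{I}_d)$, so that $\V{Z}=\V{g}(\V{x}+\sigma\V{W})$. Since $\V{g}\in\mathcal{C}^3$, a component-wise Taylor expansion with integral remainder about $\V{x}$ gives
\[
\V{Z}=\V{g}(\V{x})+\sigma\,\V{J_g}(\V{x})\V{W}+\tfrac{\sigma^2}{2}\,\V{H}[\V{W},\V{W}]+\V{R}_\sigma(\V{W}),
\]
where $\V{H}[\cdot,\cdot]$ stacks the Hessians of the components of $\V{g}$ at $\V{x}$ and, on any fixed bounded neighborhood of $\V{x}$, $\|\V{R}_\sigma(\V{W})\|\le C\sigma^3\|\V{W}\|^3$ with $C$ controlled by the third derivatives of $\V{g}$ near $\V{x}$.

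Next I would compute the mean and the centered variable. Because $\E[\V{W}]=\V{0}$, the linear term drops out of $\E[\V{Z}]$, so
\[
\V{Z}-\E[\V{Z}]=\sigma\,\V{J_g}(\V{x})\V{W}+\tfrac{\sigma^2}{2}\big(\V{H}[\V{W},\V{W}]-\E\,\V{H}[\V{W},\V{W}]\big)+\big(\V{R}_\sigma(\V{W})-\E\,\V{R}_\sigma(\V{W})\big),
\]
and then I would expand $\V{C}(\V{Z})=\E\big[(\V{Z}-\E\V{Z})(\V{Z}-\E\V{Z})^{T}\big]$ into its nine cross terms. The leading term is $\sigma^2\,\V{J_g}(\V{x})\,\E[\V{W}\V{W}^{T}]\,\V{J_g}(\V{x})^{T}=\sigma^2\,\V{J_g}(\V{x})\V{J_g}(\V{x})^{T}$, since $\E[\V{W}\V{W}^{T}]=\V{I}_d$. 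The cross terms between the $\sigma$-order piece and the $\sigma^2$-order piece are of order $\sigma^3$ but involve a third moment of $\V{W}$, hence vanish by symmetry of the Gaussian; by the same symmetry the order-$\sigma^3$ part of the cross terms with $\V{R}_\sigma$ also vanishes. Every surviving term — the $\sigma^2$-order piece squared, and the cross term of $\sigma\,\V{J_g}(\V{x})\V{W}$ with the genuinely order-$\sigma^3$ part of $\V{R}_\sigma$ — is bounded by a fixed polynomial in $\|\V{W}\|$ times $\sigma^{k}$ with $k\ge4$, and such expectations are finite; hence $\V{C}(\V{Z})=\sigma^2\,\V{J_g}(\V{x})\V{J_g}(\V{x})^{T}+O(\sigma^4)$. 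Dividing by $\sigma^2$ yields the claimed identity $\V{J_g}(\V{x})\V{J_g}(\V{x})^{T}=\tfrac{1}{\sigma^2}\V{C}(\V{Z})+O(\sigma^2)$, and letting $\sigma\to0$ gives the convergence statement. Note that the covariance computation itself does not use injectivity; injectivity (together with $\V{g}\in\mathcal{C}^3$) is what the surrounding construction needs in order to invert $\observation$, and it is what makes $\V{J_g}(\V{x})$ full-rank so that the limiting matrix is the intended rank-$d$ object.

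The main obstacle is integrability and uniform control of the Taylor remainder: the bound $\|\V{R}_\sigma(\V{W})\|\le C\sigma^3\|\V{W}\|^3$ holds only on a bounded region, whereas $\V{W}$ has unbounded support, and for a general $\mathcal{C}^3$ map on $\R^d$ even $\E\|\V{Z}\|^2$ need not be finite. I would handle this by truncation: split the Gaussian integral over a ball $\{\|\V{W}\|\le r_\sigma\}$ with $r_\sigma\to\infty$ slowly (e.g. $r_\sigma=\sigma^{-1/4}$), where the estimates above still give the stated orders, and over its complement, whose Gaussian mass is $e^{-\Theta(r_\sigma^2)}=o(\sigma^m)$ for every $m$; on the complement one uses the (at most polynomial) growth of $\V{g}$ implicit in the burst model to absorb the contribution. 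Patching the two pieces gives the $O(\sigma^2)$ error after rescaling. This is also where one sees that the Gaussian plays no special role beyond isotropy and finiteness of its low-order moments, so that any isotropic perturbation obeying the scale condition of Section~\ref{sec:problem settings} can replace it.
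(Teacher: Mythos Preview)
Your proposal is correct and follows essentially the same route as the paper's proof: Taylor-expand $\V{g}$ about $\V{x}$, substitute into the covariance, and use the isotropic Gaussian moments to isolate $\sigma^2\V{J_g}(\V{x})\V{J_g}(\V{x})^T$ with an $O(\sigma^4)$ remainder. If anything you are more careful than the paper, which stops at a first-order expansion with an $O(\|\V{x}-\V{x}_i\|_2^2)$ remainder and never addresses the unbounded Gaussian support; your explicit handling of the vanishing odd-moment cross terms and the truncation argument for the tails fill gaps that the paper's short proof leaves implicit.
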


By setting $\V{g}\equiv\encoder \circ \observation$, this Lemma provides a relation between the Jacobian of $\encoder\circ \observation$ and the covariance in the embedding space. Specifically, this translates to a system of differential equations for the Jacobian of an isometric (Nash) embedding 
\begin{eqnarray}
\label{eq:Jacobian covariance configuration}
\V{J}_{\encoder\circ \observation } (\V{x}_i)
\V{J}_{\encoder\circ \observation } (\V{x}_i)^{T} &=&
\frac{1}{\sigma^2} \V{C}\left(\encoder( \V{Y}_i)\right) +O(\sigma^2).
\end{eqnarray}
When $D=d$, we can tie the approximation of objective \eqref{eq:preserve_dists} with \eqref{eq:Jacobian covariance configuration} by
\begin{eqnarray}
\label{eq:whitening objective}
\frac{1}{\sigma^2}\V{C}\left(\encoder( \V{Y}_i)\right)&=& \V{I}, \text{ for any } i=1,\ldots,N.
\end{eqnarray} 
Thus we can evaluate the embedding function at each point without gaining access to the latent states of the system.

\label{sec:algo}
\begin{algorithm}[t]
    \caption{LOCA: LOcal Conformal Autoencoder} \textbf{Input}: Observed clouds $\V{Y}_i$,i=1,...,N. \\
{\textbf{Output}}: $\V{\theta}_e$ and $\V{\theta}_d$ - the weights of the encoder $\encoder$ and decoder $\decoder$ neural network.
    \begin{algorithmic}[1] 
    \FOR{$t=1,...,T$}
        \STATE Compute the whitening loss 
           $$L_{white} = \frac{1}{N} \sum_{i=1}^{N} \big\| \frac{1}{\sigma^2}\widehat{\V C}\left(\encoder(\V{Y}_i)\right) - \V{I}_d \big\|_F^2$$
           
           \STATE Update $\V{\theta}_e := \V{\theta}_e - \eta \nabla_{\V{\theta}_e} {L}_{white}$
            \STATE Compute the reconstruction loss 
         $$L_{recon} = \frac{1}{N\cdot M} \sum_{i,m=1}^{N,M} \big\| \V{y}_i^{(m)} - \decoder\left( \encoder\left(\V{y}_i^{(m)}\right) \right)\big\|_2^2$$
            
            \STATE Update $\V{\theta}_e := \V{\theta}_e - \eta \nabla_{\V{\theta}_e} {L}_{recon}$ and\newline  \text{\qquad\qquad\,\,\,\,} $\V{\theta}_d := \V{\theta}_d - \eta \nabla_{\V{\theta}_d} {L}_{recon}$
    \ENDFOR
    \end{algorithmic}
    \label{alg:LOCA}
\end{algorithm}
\section{LOcal Conformal Autoencoder}

We now introduce the Local COnformal Autoencoder (LOCA), with training Algorithm \ref{alg:LOCA}. Our method is based on optimizing two loss terms; the first is defined based on \eqref{eq:whitening objective} using what we refer as a ``whitening'' loss
\begin{eqnarray}
\label{eq:white_loss}
L_{white} (\encoder) = \frac{1}{N} \sum_{i=1}^N \Bigg\| \frac{1}{\sigma^2}\widehat{\V C}\left(\encoder(\V{Y}_i)\right) - \V{I}_d \Bigg\|_F^2,
\end{eqnarray} where $\encoder$ is an embedding function, and $\widehat{\V{C}}\left(\encoder( \V{Y}_i)\right)$ is the empirical covariance over a set of $M$ realizations $\encoder\left(\V{y}_i^{(1)}\right),\ldots, \encoder\left(\V{y}_i^{(M)}\right)$, where  $\V{y}_i^{(1)},\ldots, \V{y}_i^{(M)}$ are realizations of the random variable $\V{Y}_i$.

\begin{figure*}[!htb]
\centering
\subfigure[ ]{\label{fig:x_mush} \includegraphics[width=0.32\textwidth] {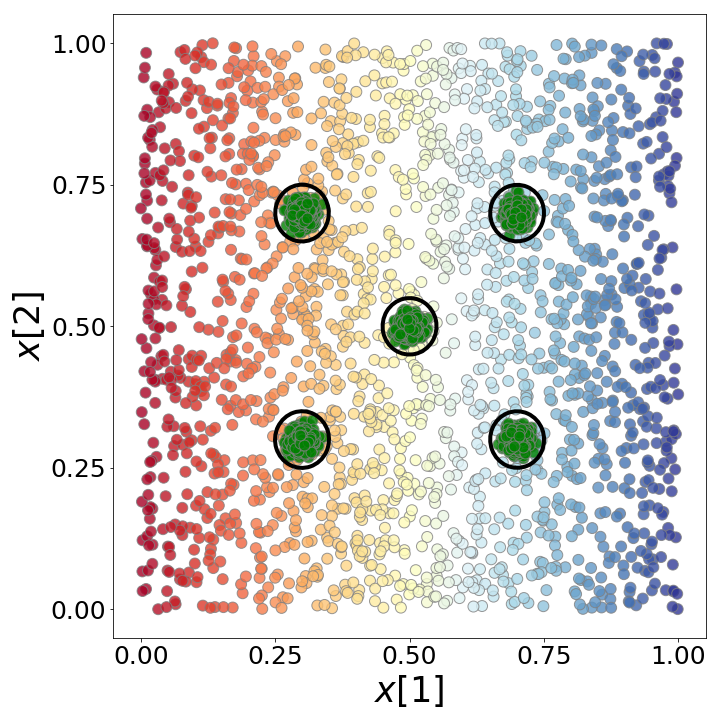}}
\subfigure[ ]{\label{fig:y_mush} \includegraphics[width=0.32\textwidth] {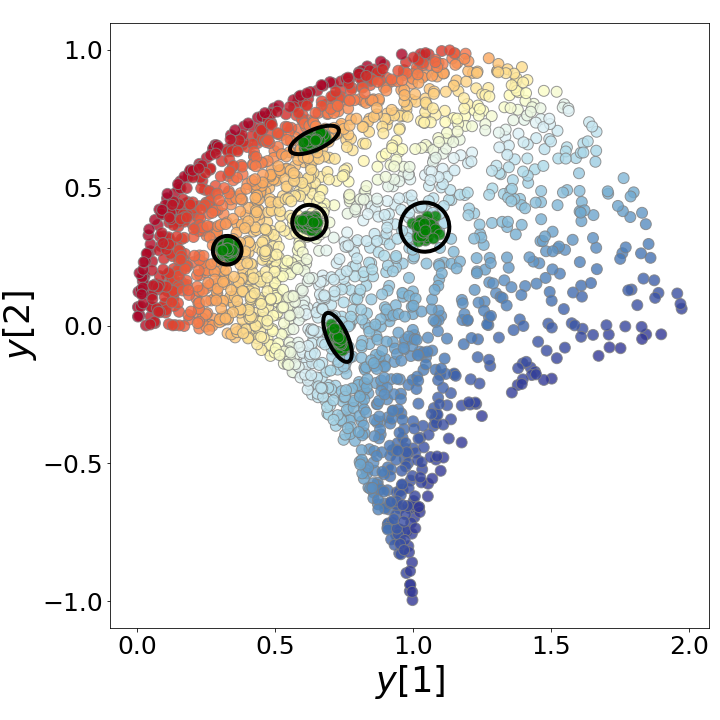}}
\subfigure[]{\label{fig:dists_mush}\includegraphics[width=0.32\textwidth]{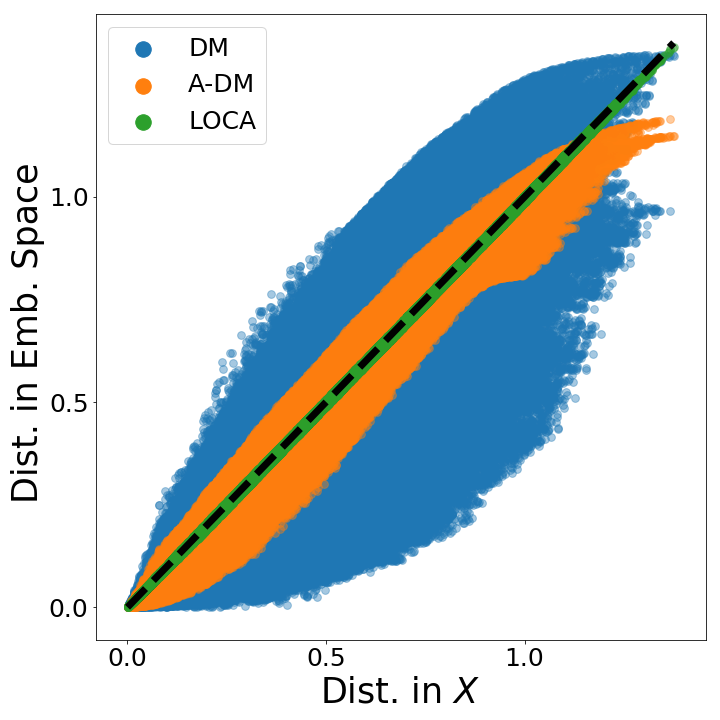}}
\caption{ Evaluating the isometric quality of the embedding (the setting is detailed in \ref{sec:experimets_isometry}).
The points $\V{x}_i,i=1,...,N=2000$ in the latent space of the system of interest \subref{fig:x_mush} are  \subref{fig:y_mush} pushed forward to the measurements $\V{y}_i=1,...,N$ by applying the nonlinear transformation $\V{f}_1$(described in \eqref{eq:experiment_tranform_mush}). The color in both figures corresponds to the values of $x[1]$ of the data. We observe ``bursts'' around each sample (based on the burst model described in Section \ref{sec:problem settings}). To illustrate this ``burst'' sampling scheme, we overlay the points with additional green samples generated by bursts at $5$ different positions. 
\subref{fig:dists_mush} Euclidean distances between pairs of points in the latent space plotted versus the corresponding Euclidean distance in the embedding space. The corresponding distances for DM and for A-DM are also shown in color, scaled with a factor which minimizes the stress (defined in \eqref{eq:stress}).
}
\label{fig:mushroom_xy}
\end{figure*}
As $\observation$ is invertible on its domain, an embedding function $\encoder$ that approximates $\observation^{-1}$ should be invertible as well. The invertibility of $\encoder$ means that there exists an inverse mapping $\V{\decoder}: \mathbb{R}^d  \longrightarrow \mathcal{Y}$, such that  $\V{y}_i = \V{\decoder}(\V{\encoder}(\V{y}_i))$ for any $i\in [N]$.
This additional objective helps remove undesired ambiguities (which may occur for insufficient sampling). By imposing an invertibility property on $\encoder$, we effectively regularize the solution of $\encoder$ away from  noninvertible functions. To impose invertibility, we define our second loss term, referred to as "reconstruction" loss:
\begin{eqnarray}
\label{eq:recon_loss}
L_{recon} (\encoder, \decoder) = \frac{1}{N\cdot M} \sum_{i,m} \Bigg\| \V{y}_i^{(m)} - \decoder\left( \encoder\left(\V{y}_i^{(m)}\right) \right)\Bigg\|_2^2.
\end{eqnarray}

We suggest finding an isometric embedding based on an autoencoder, where $\encoder$ will be defined as the encoder and $\decoder$ as the decoder. 
We construct solutions to~\eqref{eq:white_loss} and \eqref{eq:recon_loss} with a neural network ansatz $\encoder= {\network}_e^{(L)}$ and $\decoder= {\network}_d^{(L)}$  consisting of $L$ layers each, such that 

\begin{eqnarray*}
{\network}_e^{(\ell)}(y)&=&\V{\sigma}_e\left(\V{W}_e^{(\ell-1)} {\network}_e^{(\ell-1)}(y)+\V{b}^{(\ell-1)}\right),\  \ell=1,\dots,L,
\\
{\network}_d^{(\ell)}(z)&=&\V{\sigma}_d\left(\V{W}_d^{(\ell-1)} {\network}_d^{(\ell-1)}(z)+\V{b}_d^{(\ell-1)}\right),\  \ell=1,\dots,L,
\end{eqnarray*}
where ${\network}_e^{(0)}(y)=y$ and ${\network}_d^{(0)}(z)=z$.
\noindent Here, $\V{W}_e^{\ell}$, $\V{b}_e^{\ell}$ and $\V{W}_d^{\ell}$, $\V{b}_d^{\ell}$ are the weights and biases at layer $\ell$ of the encoder and decoder, respectively. The functions $\V{\sigma}_e,\V{\sigma}_d$ are nonlinear activations applied individually to each input coordinate. As the activation function can have a limited image, we recommend removing the non linear activation for $\ell=L$. 

We propose to find $\encoder$ and $\decoder$  by alternating between a stochastic gradient descent on \eqref{eq:white_loss} and \eqref{eq:recon_loss}. It is important to note that the main objective that we are trying to optimize is based on \eqref{eq:white_loss}, therefore  \eqref{eq:recon_loss} can be viewed as a regularization term. A pseudo-code of this procedure appears in  Algorithm \ref{alg:LOCA}. To prevent over-fitting, we propose an early stopping procedure \cite{li2019gradient,basri2019convergence} by evaluating the loss terms on a validation set. In section \ref{sec:prop}, we demonstrate different properties of the proposed LOCA algorithm using various geometric example manifolds.

Note that functions that perfectly satisfy our objectives are not unique, i.e. for any solution $\V{\rho}$ we can define an equivalent solution $\overline{\V{\rho}}$ that will attain the same loss. Specifically, we can define it by $\overline{\V{\rho}}(\V{y}) = \V{U}\V{y}+ \V{c}$ for any $\V{y}\in \mathcal{Y}$, where $\V{U}\in O(d)$ and $\V{c}\in \mathbb{R}^d$.

\textbf{To summarize:} (i) We collect distorted neighborhoods of a fixed size ($\sigma$) around data points of the system of interest. (ii) We embed/encode the data in a low dimensional Euclidean space so that these neighborhoods are standardized or z-scored. (iii) The embedding is decoded back to the original measurements, to regularize the encoder. In Section \ref{sec:prop}, we demonstrate that (iv) The encoder is invariant to the measurement modality (up to errors of $O(\sigma^2)$, and modulo an orthogonal transformation and shift). (v) The parametric form of the embedding enables reliable interpolation and extrapolation.

\begin{figure*}[t]
\centering
\subfigure[ ]{\label{fig:oos_xFrame} \includegraphics[width=0.23\textwidth] {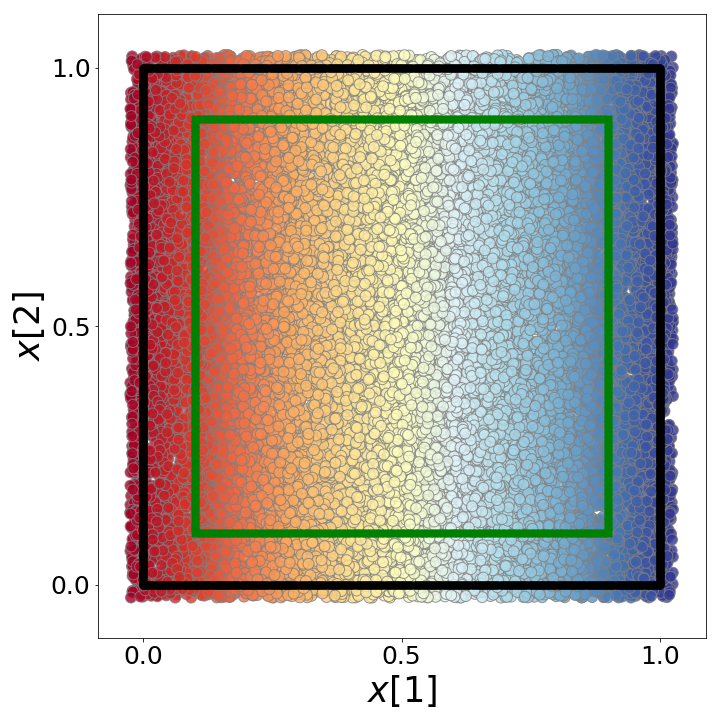}}
\subfigure[ ]{\label{fig:oos_yFrame} \includegraphics[width=0.23\textwidth] {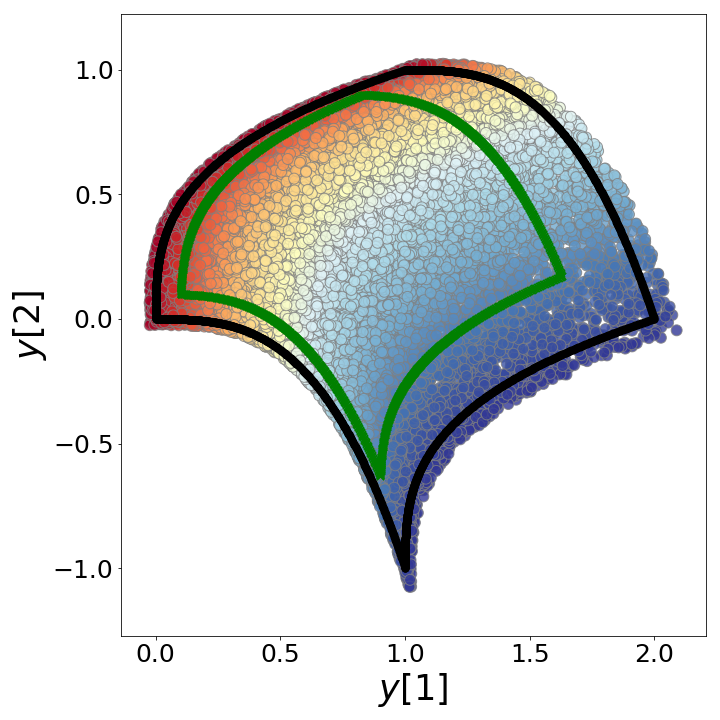}}
\subfigure[ ]{\label{fig:oos_Frame_loca} \includegraphics[width=0.23\textwidth] {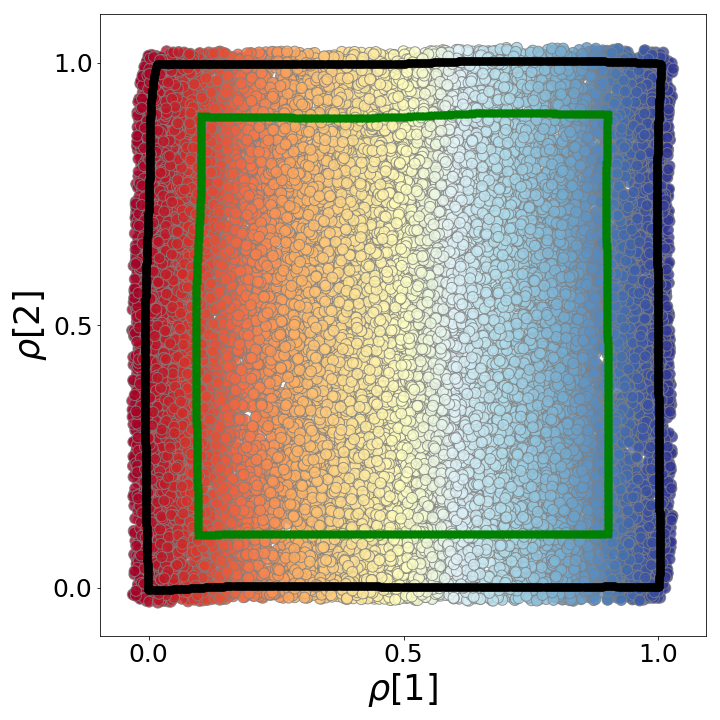}}
\subfigure[ ]{\label{fig:oos_Frame_loca_dists} \includegraphics[width=0.23\textwidth] {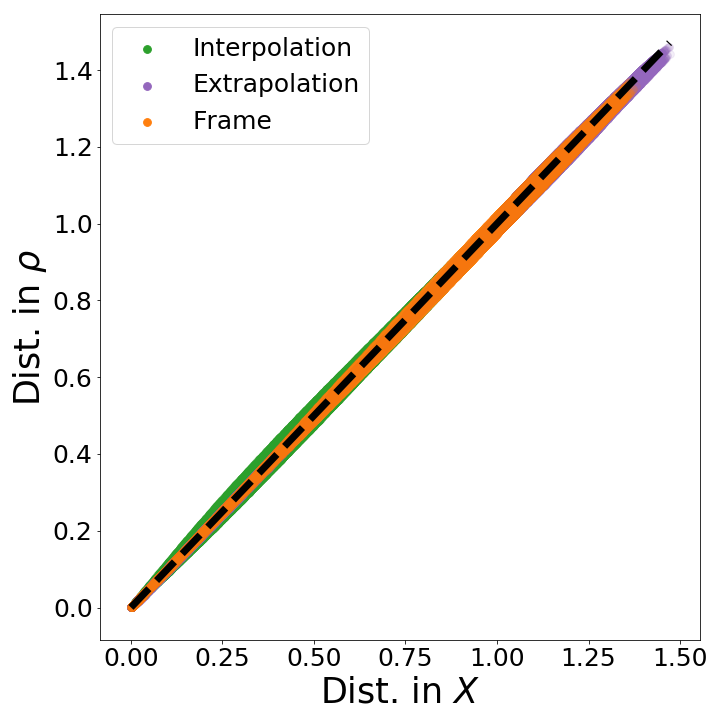}}
\caption{Evaluating out-of-sample performance of the encoder, detailed in Section \ref{sec:out_of_sample}.  \subref{fig:oos_xFrame} The latent space of interest $\V{X}$; our training region is bounded here by the black and green frames. The interpolation region lies within the green frame, while the extrapolation regime lies outside the black frame. \subref{fig:oos_yFrame} The observed space $\V{Y}$ with corresponding regions of interest. \subref{fig:oos_Frame_loca} The calibrated embedding $\V{\rho}$ (using an orthogonal transformation and a shift) with corresponding regions of interest. The color in these figures corresponds to the values of $x[1]$. Here, we calibrated the embedding merely for visualisation purposes. \subref{fig:oos_Frame_loca_dists} The Euclidean distances between pairs of points in the latent space versus the corresponding Euclidean distance in the embedding space.}
\label{fig:mushroom_oos}
\end{figure*}

\section{Properties of LOCA}
\label{sec:prop}

In this section, we evaluate the properties of the proposed embedding $\encoder$ by generating various synthetic datasets. We compare the extracted embedding provided by LOCA (described in Section \ref{sec:algo}) with the embeddings of alternative methods such as Diffusion Maps  (DM) \cite{Dmaps} and Anisotropic Diffusion Maps  (A-DM) \cite{nonlinearICA} denoted as $\dmaps$ and $\admaps$ respectively. Note that the Diffusion Maps algorithm does not use the ``burst'' data, while the Anisotropic Diffusion Maps uses it in order to construct a local Mahalanobis metric. We present the details of the exact implementation for each of the methods in the supplementary information (see Section \ref{sec:dm_adm_description}).

\subsection{LOCA creates an isometric embedding}
\label{sec:experimets_isometry}

We first evaluate the isometric quality of the proposed embedding $\encoder$ with respect to the true inaccessible structure of $\mathcal{X}$. Here, we follow the setting in \cite{nonlinearICA}, where the intrinsic latent coordinates are independent, specifically distributed by $U[0,1]^2$. Based on this distribution, we sample $N=2000$ anchor points $\V{x}_i$ along with ``bursts'' $\V{X}_i$. Each ``burst'' consists of $M=200$ points sampled independently from $\Nc_2(\V{x}_i, \sigma^2 \V{I}_2) $, where $\sigma=0.01$.

We now define the non linear transformation, $f_1:\R^2\rightarrow\R^2$  (as in \cite{nonlinearICA}) to the ambient space by
\begin{eqnarray}
\label{eq:experiment_tranform_mush}
\V{f}_1(\V{x}) &=&  \left( \begin{array}{c}  x[0] +  x[1] ^3\\ 
-x[0]  +   x[1] ^3
\end{array} \right),
\end{eqnarray}
for any $\V{x}\equiv (x[0],x[1])^{\top} \in \R^2$. In Fig.  \ref{fig:mushroom_xy}, we present measurements from $\mathcal{X}$ with the corresponding measurements of $\mathcal{Y}$, where $\mathcal{Y}=f(\mathcal{X})$. To illustrate the local deformation caused by $\V{f}_1$, we overlay the samples with clouds around $5$ different positions (see green dots). Next, we apply LOCA (described in Algorithm \ref{alg:LOCA}) to compute an embedding $\encoder$ that satisfies \eqref{eq:white_loss} and \eqref{eq:recon_loss}. We evaluate the isometric quality of LOCA by comparing the pairwise Euclidean distances in the embedding space $\encoder$ to the Euclidean distances in the latent space $\mathcal{X}$. For comparison, we apply DM and A-DM (that also uses the ``bursts'') to $\mathcal{Y}$ and plot the pairwise Euclidean distances in the embedding vs. the corresponding Euclidean distances in the latent space. Here, we evaluate isometry up to a scaling, as DM and A-DM use eigenvectors (that are typically normalized). The scaling is optimized to minimize the stress defined by
\begin{eqnarray}
\label{eq:stress}
\text{Stress}(\V{g})  &=& \frac{1}{N} \sum_{i,j=1}^N \left(D_x (\V{x}_i, \V{x}_j) - D_{\V{g}} (\V{y}_i, \V{y}_j)\right)^2
\end{eqnarray}
where $\V{g}$ is some embedding function from $\mathcal{Y}$ and $D_{\V{g}} (\V{y}_i, \V{y}_j)= \| \V{g}(\V{y}_i)- \V{g}(\V{y}_j)\|_2 $. Specifically the stress values for LOCA, and the scaled versions of DM and A-DM are $1.5 \cdot 10^{-5}, 0.03$ and $0.002$, respectively. As evident from the stress values and from Fig. \ref{fig:mushroom_xy}, LOCA provides an embedding that is isometric to $\mathcal{X}$ (up to an orthogonal transformation and shift). 

\subsection{The encoder is observed to extend reliably to unseen samples}
\label{sec:out_of_sample}

In the next experiment, we evaluate the out-of-sample extension capabilities of LOCA. The experiment is based on the same nonlinear transformation described in Section \ref{sec:experimets_isometry}. We sample $N=2,000$ points from a partial region of the latent representation $\mathcal{X}$, specifically described by $[0,1]^2 \backslash [0.1,0.9]^2$. In Fig. \ref{fig:mushroom_oos}, we present the framed sampling regions along with the corresponding observed framed regions in $\mathcal{Y}$ (see black and green frames \ref{fig:oos_xFrame} and \ref{fig:oos_yFrame}). To generate the ``bursts'' we follow the setting presented in Section \ref{sec:experimets_isometry} and refer to them as our training set. The test set is defined by an additional $2\cdot 10^4$ samples generated as in Section \ref{sec:experimets_isometry} from $[-.025,1.025]^2$ in $\mathcal{X}$ pushed forward by $\V{f}_1$.

In Fig. \ref{fig:oos_Frame_loca}, we quantify the interpolation and extrapolation capabilities of LOCA by presenting the extracted embedding along with the corrected frame. To further evaluate the quality of this embedding we compare the pairwise distances in \ref{fig:oos_Frame_loca_dists} (as described in Section \ref{sec:experimets_isometry}). This comparison (presented in Fig. \ref{fig:oos_Frame_loca_dists}) supports the benefits of using LOCA for extending the representation to unseen samples. To stress it even more, the actual stress values of LOCA in the interpolation region, on the frame and in the extrapolation region are all approximately $10^{-4}$.

\begin{figure*}[!htb]
 \centering
\subfigure[ ]{\label{fig:oos_latent_inter_x} \includegraphics[width=0.23\textwidth] 
{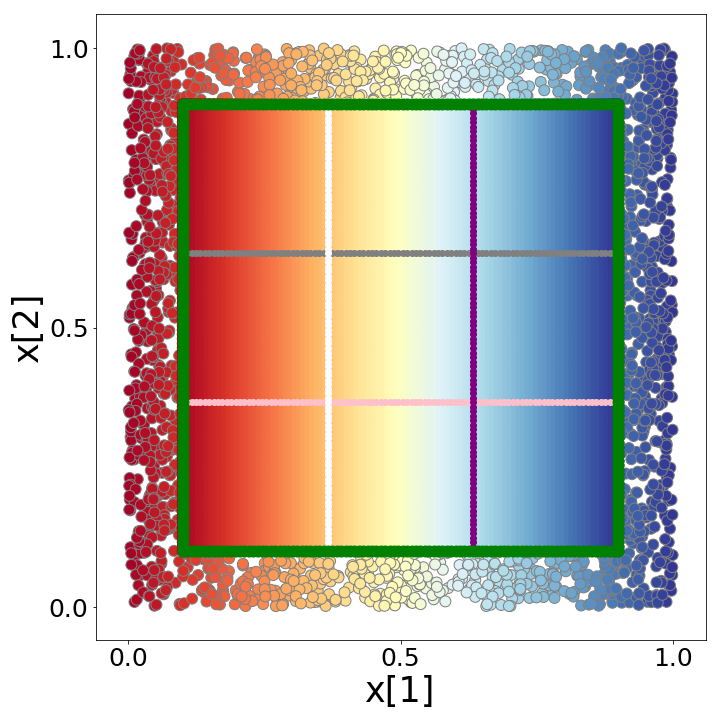}}
\subfigure[ ]{\label{fig:oos_latent_inter_y} \includegraphics[width=0.23\textwidth]   
{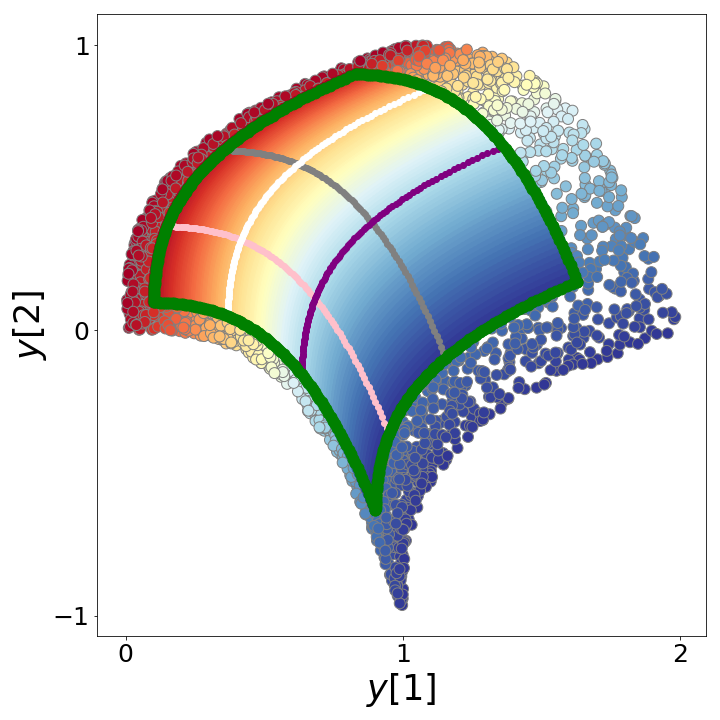}}
\subfigure[ ]{\label{fig:oos_latent_inter_loca_code} \includegraphics[width=0.23\textwidth]   
{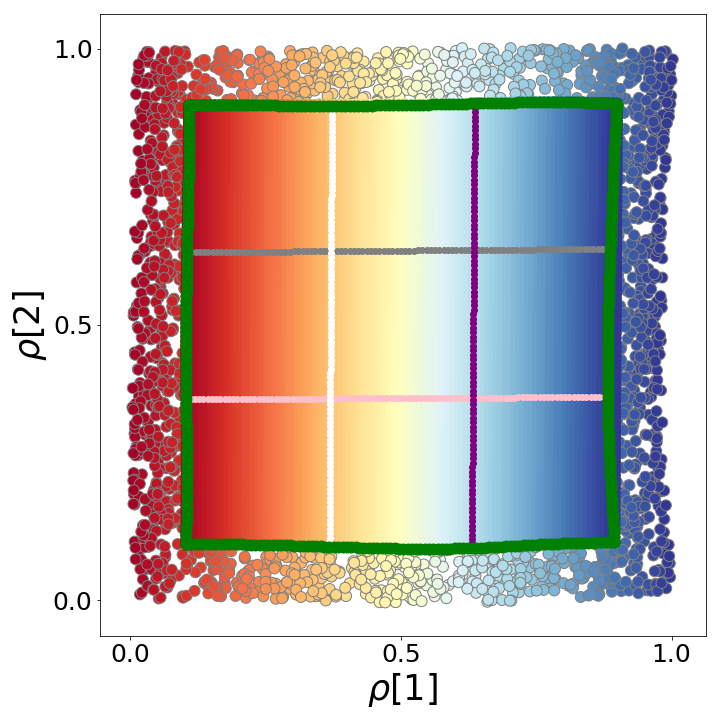}}
\subfigure[ ]{\label{fig:oos_latent_inter_loca_recon} \includegraphics[width=0.23\textwidth]   
{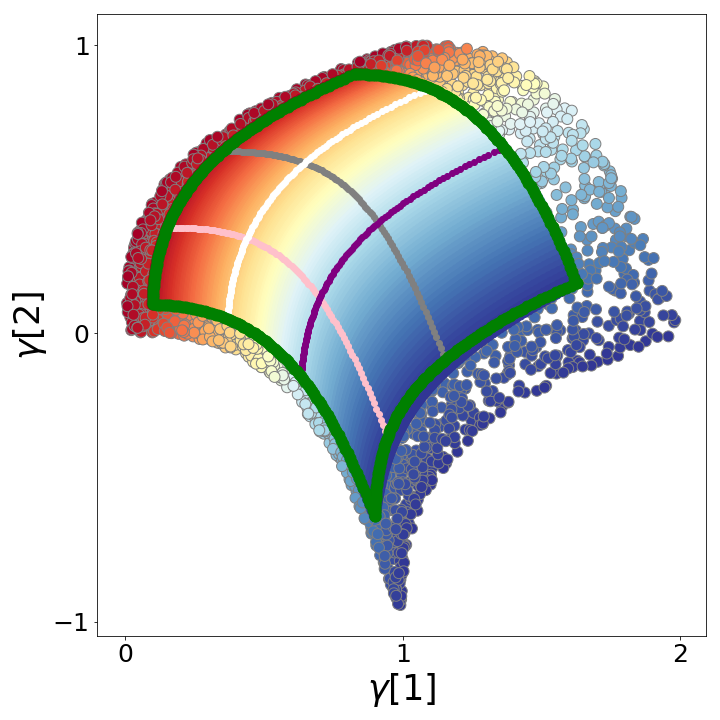}}
\caption{Evaluating the out-of-sample reconstruction capabilities of LOCA. Here, we attempt to generate new point in the ambient space by performing linear interpolation in the embedding space. A description of the linear interpolation appears in \ref{sec:interpolation}. \subref{fig:oos_latent_inter_x} The inaccessible latent space, the points surrounding the green frame are the training samples. Interpolation is performed horizontally and vertically between points on the green frame, see for example the $4$ colored lines. \subref{fig:oos_latent_inter_y} The pushed forward data from \subref{fig:oos_latent_inter_x} based on the non linear function $\V{f}_1$ described in \eqref{eq:experiment_tranform_mush}.
\subref{fig:oos_latent_inter_loca_code} Recovered calibrated embedding $\encoder$. The training samples in the frame and green boarder are embedded using LOCA. Within the embedded green boarder we perform an additional linear interpolation, using the same corresponding pairs as were used in the latent space. For example, see the $4$ horizontal and vertical colored lines. \subref{fig:oos_latent_inter_loca_recon} The pushed forward data from \subref{fig:oos_latent_inter_loca_code} by the decoder ($\V{\gamma}$) learned by LOCA. This experiment demonstrates that LOCA learns a decoding function that is consistent with the unknown transformation, even in a regime that is not covered by training samples.
}
    \label{fig:mushroom_interpolation}
\end{figure*}

\subsection{The decoder is observed to extend reliably to unseen samples}
\label{sec:interpolation}
In this experiment we evaluate the out-of-sample capabilities of LOCA's decoder. While in \ref{sec:out_of_sample} we trained LOCA and evaluated the quality of the encoder on unseen data, here we focus on the performance of the decoder. Specifically, we apply the decoder to unseen samples from the embedding space. Each unseen sample in the embedding space is created using linear interpolation. We now provide the exact details of this evaluation.

For this interpolation experiment, we use the same LOCA model trained in \ref{sec:out_of_sample} on the framed data. We further generate $N=400$ points in the interior boundary of the frame, represented by the green dots in \figref{fig:mushroom_interpolation}. Next, we perform linear interpolation between horizontal and vertical pairs, see for example the colored lines in  \figref{fig:oos_latent_inter_x}. 
The data is then pushed forward using the nonlinear transformation described in \eqref{eq:experiment_tranform_mush}, as shown in  \figref{fig:oos_latent_inter_y}. We embed the training samples along with the green frame using LOCA; a calibrated version of the embedding space is shown in \figref{fig:oos_latent_inter_loca_code} (up to a shift and an orthogonal transformation). Then, we perform an additional interpolation in the embedding space using the same corresponding pairs as were used in the latent space (see \figref{fig:oos_latent_inter_x}). Finally, we apply the decoder to the embedding of the training samples and to the newly interpolated samples, these are presented in \figref{fig:oos_latent_inter_loca_recon}.
As evident in this figure, the reconstructed points faithfully capture the mushroom shaped manifold. The mean squared error between the push-forward interpolated points and the decoded interpolated points is $2.3\cdot 10^{-4}$, with a standard deviation of $2.4\cdot 10^{-4}$. This experiment demonstrates that LOCA may be also used as a generative model, by reconstructing new points generated using interpolation in the embedding space.

\begin{figure}[!htb]
\centering
    \subfigure[ ]{\label{fig:stereographical_illustration} \includegraphics[width=0.32\textwidth, trim={0 7cm 0 7cm},clip] {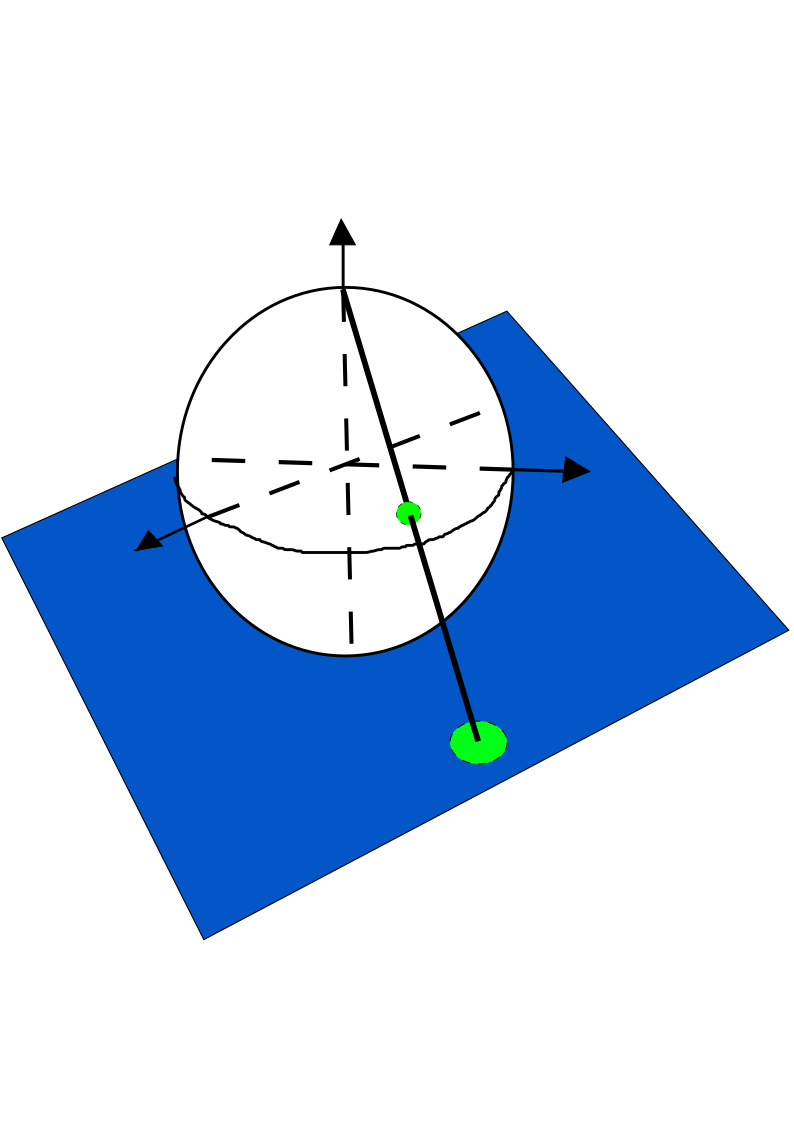}}
    \subfigure[ ]{\label{fig:stereographical_origData} \includegraphics[width=0.32\textwidth] {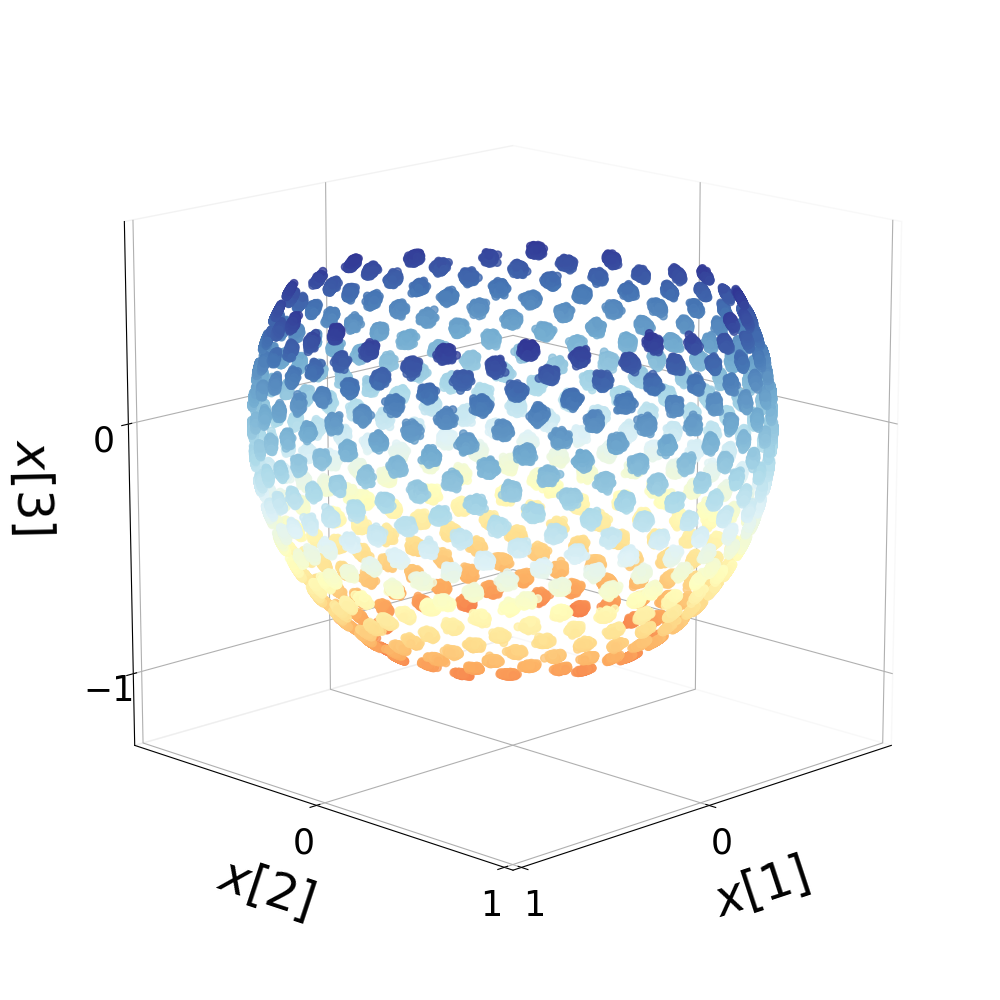}}
    \subfigure[ ]{\label{fig:stereographical_observedData} \includegraphics[width=0.32\textwidth] {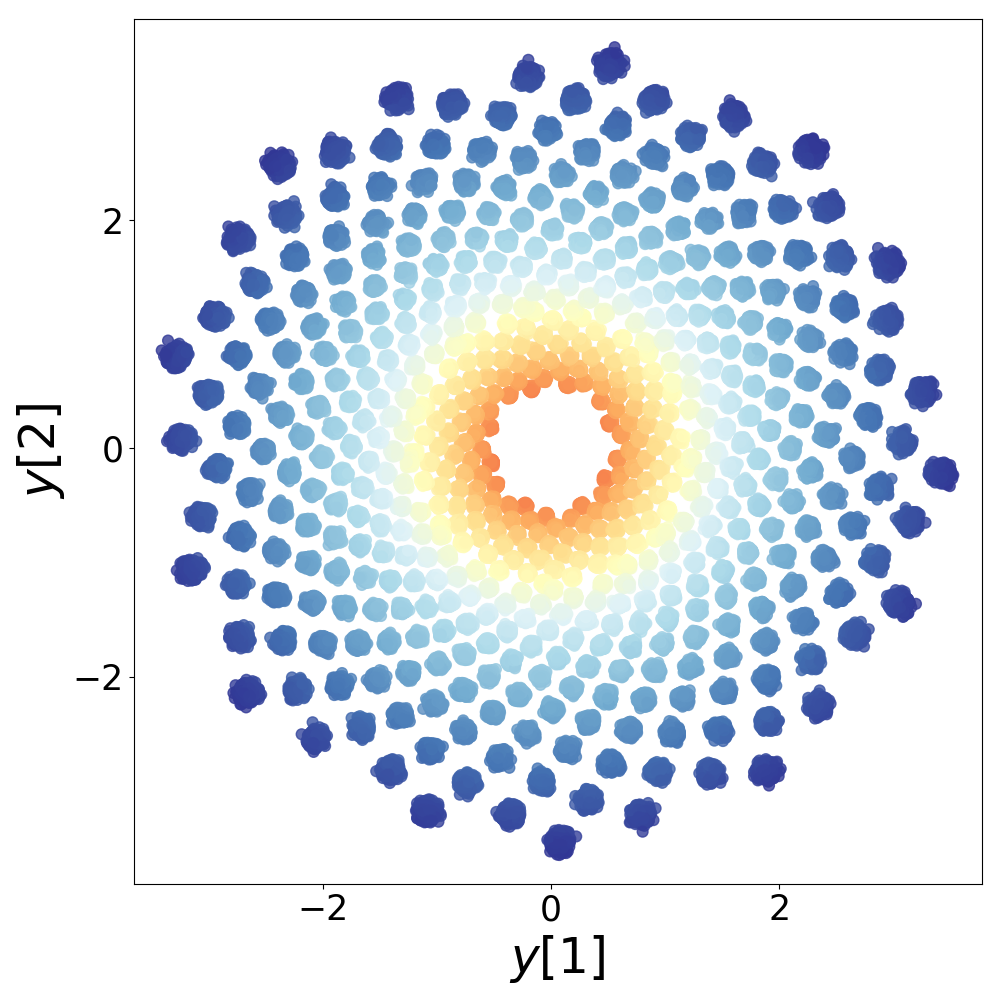}}
    \subfigure[]{\label{fig:stereographical_embeddingLoca}
    \includegraphics[width=0.32\textwidth]
    {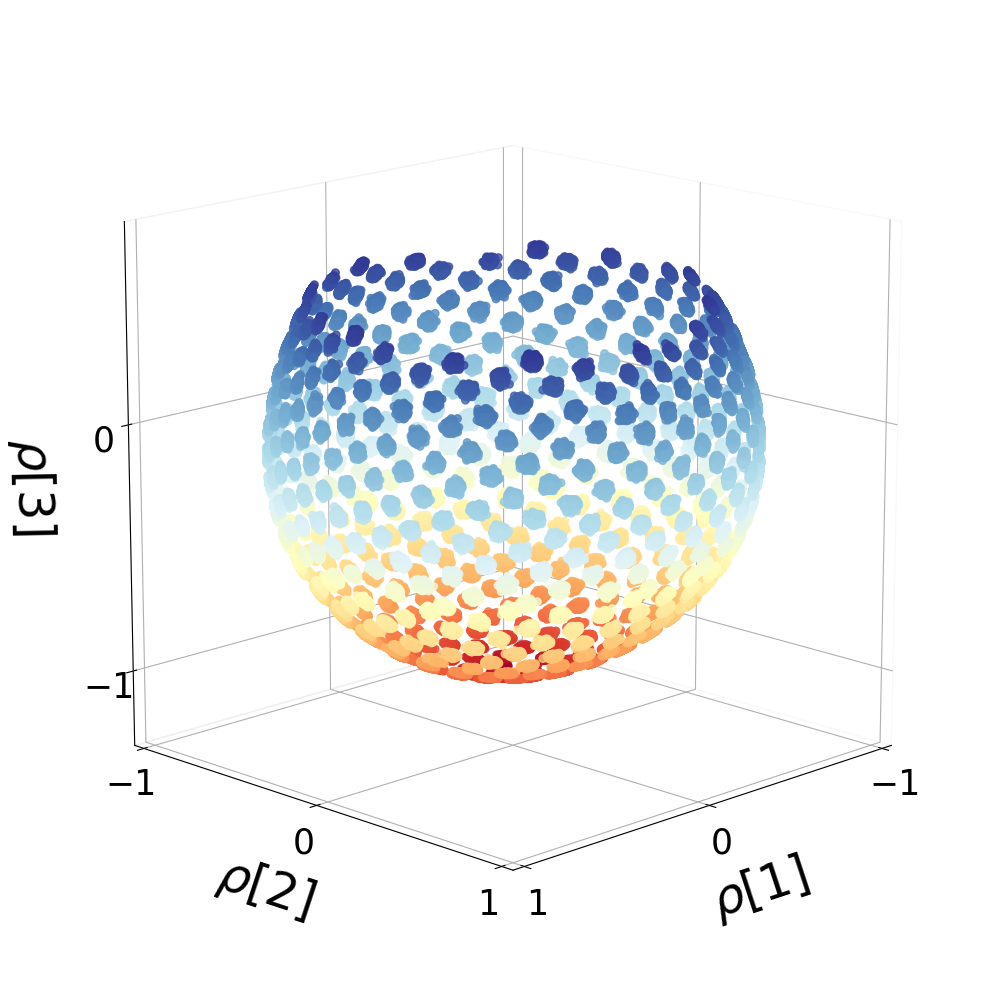}}
    \subfigure[ ]{\label{fig:stereographical_results} 
    \includegraphics[width=0.32\textwidth] {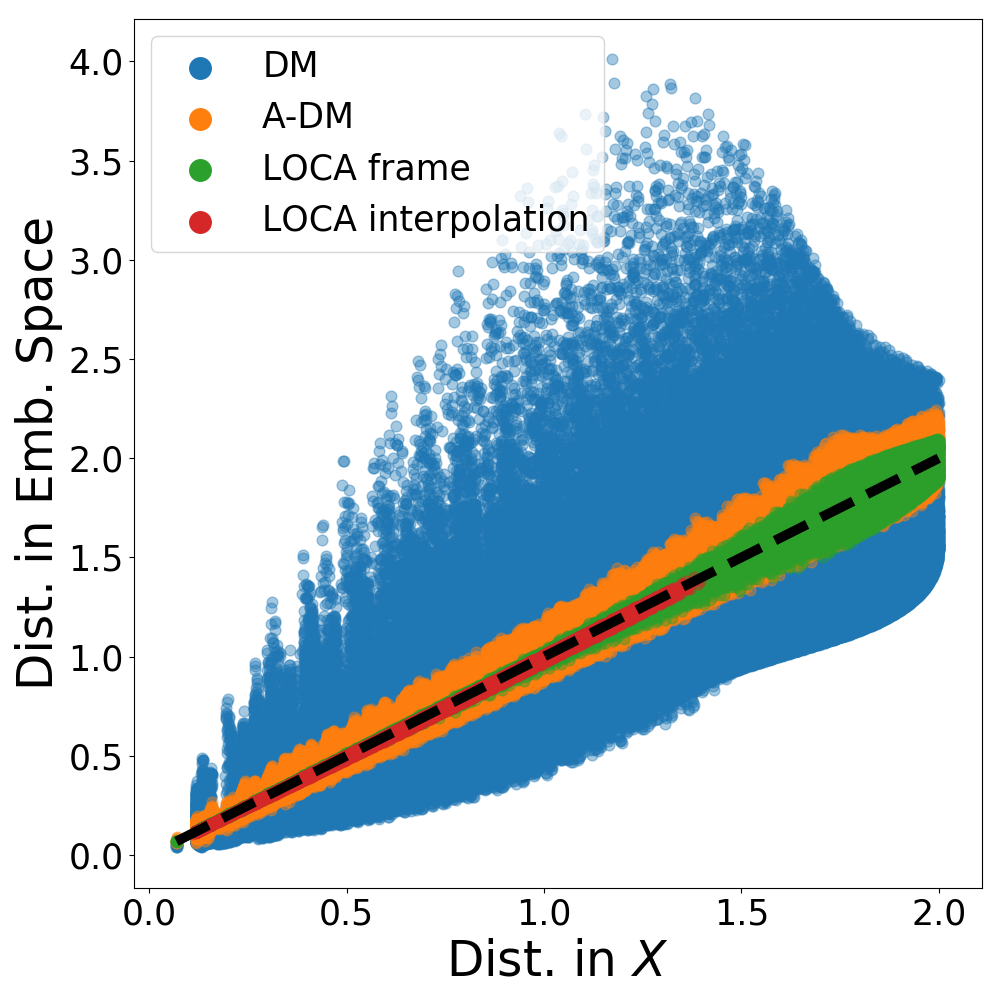}}
    \caption{The stereographic projection experiment (see description in Section \ref{sec:Loca_curved_manifold}). \subref{fig:stereographical_illustration}: A schematic illustration of the stereographic projection generating the data. \subref{fig:stereographical_origData}: The original latent representation of the ``bursts'' employed. \subref{fig:stereographical_observedData}: The 2-dimensional observations of the ``bursts" created using the stereographic projection. The plot contains only the training data, meaning points that satisfy $\alpha\in (\pi/3,5\pi/6)$, leaving a ``hole" at the south pole.  \subref{fig:stereographical_embeddingLoca}: The 3-dimensional embedding of these training data, with the missing lower cap ($\alpha\in (\pi/3,\pi]$).
    The color represents the value of $\alpha$ of each point as defined in \eqref{eq:stereographical_x}. The colors used in \subref{fig:stereographical_origData}- \subref{fig:stereographical_embeddingLoca} correspond to the spherical angle $\alpha$ defined in \eqref{eq:stereographical_x}. \subref{fig:stereographical_results} The Euclidean distances between pairs of points in the original, 3-dimensional latent space versus the corresponding Euclidean distance in the embedding space. Here we compare distances based on the training region (frame) as well as the unseen test region where $\alpha\in (\pi/6,\pi]$ (interpolation).}
    \label{fig:stereographical_data}

\end{figure}
\subsection{LOCA on a curved manifold}
\label{sec:Loca_curved_manifold}
Here, we examine a more challenging configuration, generalizing our original Euclidean problem setting. The latent space is now taken to be a $k-$dimensional manifold that resides in $\R^d$, where $d>k$ and $d$ is the minimal dimension required to embed the manifold in a Euclidean space isometrically. Interestingly, we consider an observation process such that the observation dimension, $D$, is smaller than $d$. To clarify, this means that the measurement process can involve projections to a lower dimension.

We consider a manifold that covers three quarters of a $2$-dimensional unit sphere in $\R^3$, where the training points admit the following form
\begin{eqnarray}
\label{eq:stereographical_x}
\V{x}= \left(\begin{array}{c} \sin(\alpha)\cos(\beta) \\ \sin(\alpha) \sin(\beta) \\ \cos(\alpha) \end{array}  \right) \qquad \beta\in [0,2\pi), \alpha \in [\pi/3, \pi].
\end{eqnarray}

The manifold is embedded in $\R^3$ but has an \textit{intrinsic dimension} of $2$. This requires us to revisit our definition of ``bursts'', discussed in \eqref{eq:ambient_clouds}. Specifically, we assume that the bursts are confined to the manifold. 
Here, we approximate this constraint in the form of random variables $\V{Z}_i$ obtained using a local isotropic Gaussian with a two dimensional covariance $\sigma^2 \V{I}_2$, defined on the tangent plane to the point.  

We consider $N=491$ states of the system $\V{x}_i,i=1,..,N,$ which are generated on a uniform grid using the ``Fibonacci Sphere'' sampling scheme \cite{swinbank2006fibonacci} for points with $\alpha\in [\pi/3,5\pi/6]$. We define each ``burst'' $\V{X}_i$ using $M=400$ points sampled from our two-dimensional isotropic Gaussian defined by the tangent plane around $\V{x}_i$ with $\sigma =0.01$. Now, in order to create the observed samples $\V{y}$ we apply the stereographic projection to $\V{x}$ by projecting each point from $\mathcal{X}$ onto a two dimensional space defined by:
\begin{eqnarray}
\V{y} = \left( \begin{array}{c} \frac{x[1]}{1-x[3]} \\ \frac{x[2]}{1-x[3]} \end{array}\right).
\end{eqnarray}
The transformation can be thought of as a projection onto the plane $\mathbb{R}^2\times \{1\}$; an illustration of the stereographic projection appears in Fig. \ref{fig:stereographical_illustration}. The training ``bursts'' in the latent space and the observed space appear in Figs. \ref{fig:stereographical_origData} and \ref{fig:stereographical_observedData}, respectively.

We apply DM, Anisotropic DM, and LOCA to embed the data in a $3$-dimensional space. The difference between the pairwise Euclidean distances (see description in Section \ref{sec:experimets_isometry}) in each embedding space and the original Euclidean distances along with the extracted embeddings are described in the Supplementary Information. The stress values for LOCA, and the scaled DM and A-DM on the training data are $10^{-3},0.18$ and $6\cdot 10^{-3}$, respectively. In order to examine the interpolation capabilities of LOCA, we generate $55$ points using the "Fibonacci Sphere" that satisfies $\alpha\in (5\pi/6,\pi]$. Using the trained model of LOCA we embed these data and get that the stress value is $10^{-4}$ . Fig. \ref{fig:stereographical_data} demonstrates that LOCA can well approximate an isometry even if the dimension of the observations is lower than the minimal embedding dimension needed for the isometry, i.e. $k>D$.

\section{Applications}
\subsection{Flattening a curved surface}
\label{sec:flatten_surface}
Our first application is motivated by \cite{han2018robust}, in which the authors propose a method for estimating the $3$-dimensional deformation of a $2$-dimensional object. They focus on the task of autonomous robotic manipulation of deformable objects. Their method uses a stream of images from a RGB-D camera and aligns them to a reference shape to estimate the deformation.

We explore the applicability of LOCA for the task of estimating a deformation based on a $2$-dimensional projection of an object, without using any depth feature. We print a black square-shaped grid of $N=2500$ points of interest; at each location, we generate a ``burst'' with $M=50$ samples drawn from a Gaussian with $\sigma=0.01$. We manually squeeze the printed square and photograph the deformed object from above. The image of the original squared object along with a $2$-dimensional snapshot of the deformed object appears in Fig. \ref{fig:paper_train_image}. 
This experiment complements our motivating example presented in Fig. \ref{fig:motivation}.

To define the anchor points $\V{y}_i$ along with corresponding ``bursts'', we first identify the locations of all points by applying a simple threshold filter to the image. Then, we identify the ``bursts'' by applying the Density-Based Spatial Clustering of Applications with Noise (DBSCAN) \cite{dbscan}. In Fig. \ref{fig:paper_train_image} we present the identified groups of points (black). Note that some ``bursts'' are lost in this process, as there is nearly no gap between them in the deformed shape. Here, the parameter $\sigma^2$ for the whitening loss  \eqref{eq:white_loss} is estimated using the median of the first eigenvalue of the ``bursts'' covariances. We apply LOCA and extract the embedding $\encoder$. In Fig. \ref{fig:paper_embed_image}, we present a calibrated version (scaled rigid transformation) of the embedding, $\tilde{\V{\rho}}$, overlaid on the latent representation. The transformation is found by minimizing the mean squared error between the underlying representation and the extracted embedding of the $4$ corners of the square. This experiment demonstrates that LOCA corrects the unknown deformation based on the {\em estimated} ``bursts''. 

\begin{figure}[!htb]
    \centering
    \subfigure[]{\label{fig:paper_train_image}
        \includegraphics[width=0.45\textwidth]{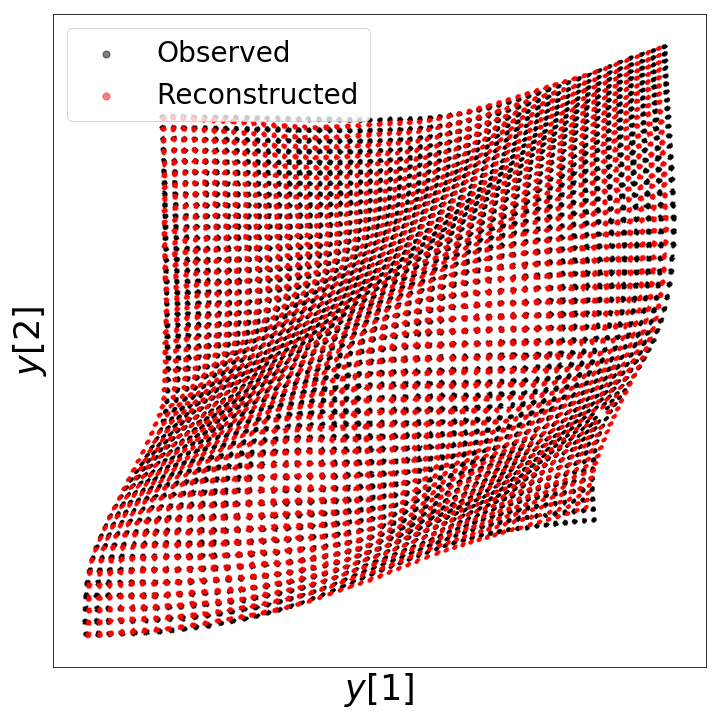}}
    \subfigure[] { \label{fig:paper_embed_image}
        \includegraphics[width=0.45\textwidth]{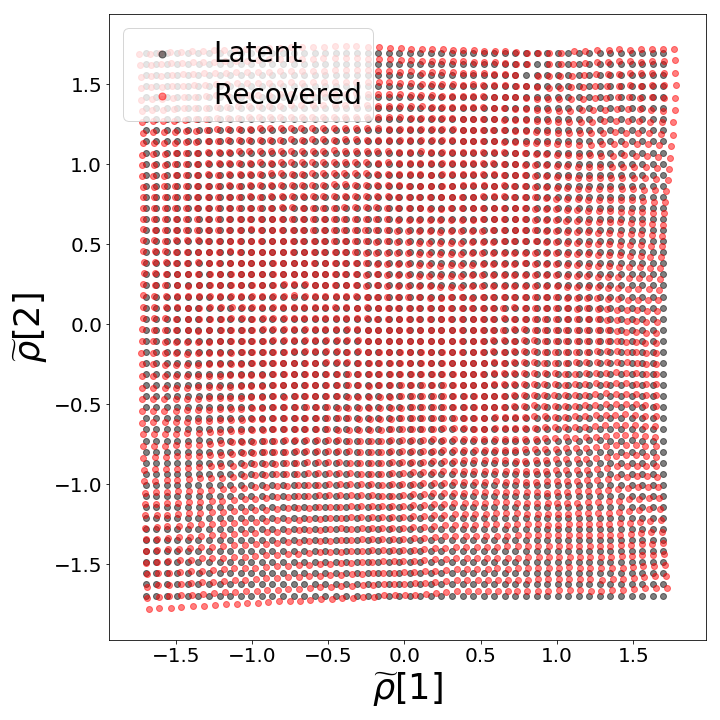}}
    \caption{A LOCA embedding can flatten a deformed object using estimated ``bursts''. \subref{fig:paper_train_image} The input training samples used by LOCA (black) and the reconstructed points (red). \subref{fig:paper_embed_image} The calibrated embedding (using an orthogonal transformation and a shift) of the deformed object using LOCA (red) and the underlying representation of points (black). As in the synthetic examples, we use calibration only for visualization purposes. Here, LOCA manages to correct the deformation of the local ``bursts'', and thus learns a function that approximately uncovers the latent structure of the object. }
    \label{fig:flatten_image_res}
\end{figure}

\subsection{Application to Wi-Fi localization}
\label{sec:wifi}
Here, we evaluate LOCA for the task of geographical localization based on mobile devices. The localization problem involves estimating the geographic location of a receiver device based on signals sent from multiple Wi-Fi transmitters.
This problem has been addressed by modeling the strength of the signal in time and space, or based on fingerprints of the signal learned in a supervised setting \cite{jaffe2014single,alikhani2017fast}. We address the problem in an unsupervised fashion by applying the proposed LOCA algorithm without employing any physical model.
 
The experiment is performed by simulating the signal strength of $L=\wifiAccessPointCountNumeric$ Wi-Fi transmitters at multiple locations across a model of a room, where each transmitter uses a separate channel. The room is modeled based on a simplified floor plan of the fourth floor of MIT's Ray and Maria Stata Center. We refer to the two-dimensional representation of the room as $\mathcal{X}\subset \mathbb{R}^2$; a schematic of the floor plan with $600\times 1000$ pixels appears in Fig. \ref{fig:wifi-full} (black line). The $L=\wifiAccessPointCountNumeric$ Wi-Fi transmitters are randomly located across the floor plan; we denote each of these locations by $\V{t}_{\ell}\in \mathbb{R}^2$, for any $\ell\in \{1,...,\wifiAccessPointCountNumeric\}$. Next, we sample $\V{x}_i,i=1,..., N$, using $N=\numSensorReadings$ anchor points distributed uniformly over $\mathcal{X}$ and define the amplitude of each measured Wi-Fi signal using a radial basis function (RBF). The RBF decay is monotonic in the distance between the transmitter and the measurement location, so that the amplitude at point $x_i$ of the signal of transmitter $\ell$ is $y_{i,\ell}=\exp(-\|\V{x}_i-\V{t}_{\ell}\|_2^2 / \epsilon ^ 2)$, where $\epsilon =\wifiTransmissionDistance$ pixels. Here the ``bursts'' will be defined by a circle of $M=6$ receivers equally spaced at a radius of $r={\sensorArrayRadiusPixels}$ pixels around each anchor point $\V{x}_i$: these 6 receivers model a circular sensor array as the measurement device.
 
 Next, we apply LOCA and embed the observed vectors of multi-channel amplitudes into a $2$-dimensional space. To demonstrate the performance of LOCA  we calibrate the LOCA embedding to the ground truth floor plan using a shift and scaled orthogonal transformation, as done in \ref{sec:flatten_surface} but using all the training data.  In Fig. \ref{fig:wifi-full} we present the scaled, calibrated two-dimensional embedding $\tilde{\encoder}$ with the locations of the transmitters and anchor points.

\begin{figure}[!h]
    \centering
    \includegraphics[width=.9\textwidth]{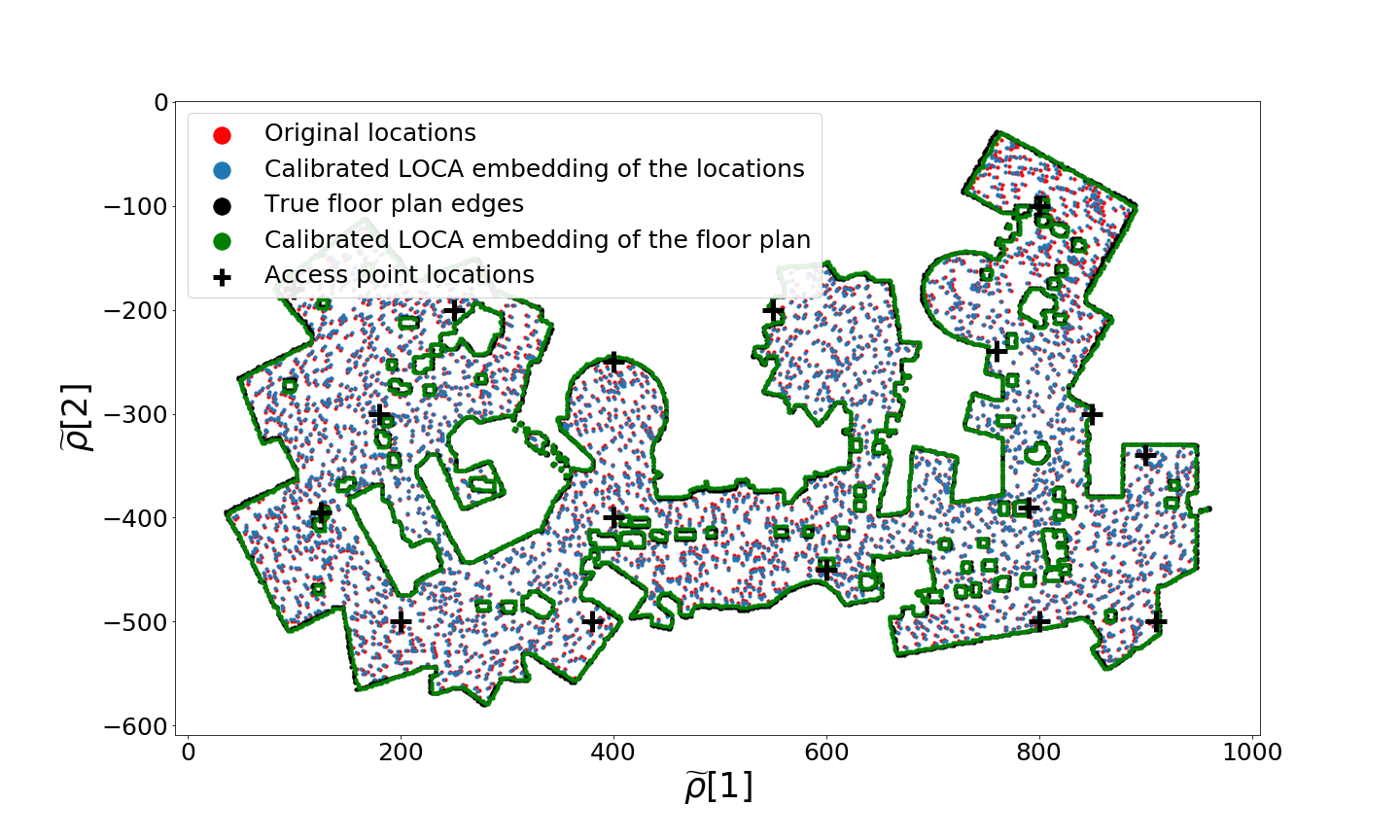}
    \caption{Application of LOCA to Wi-Fi localization. We use a floor plan model based on the fourth floor of MIT's Ray and Maria Stata Center. The edges of the ground truth model appear in black. We simulate $L=17$ Wi-Fi access points (transmitters), which are presented as black crosses. We use $N=4000$ locations depicted as red dots with corresponding $M=6$ ``burst samples'' around them (modeling a circular antenna array). To demonstrate that LOCA's embedding is coherent with the latent representation, we calibrate the embedding to the true floor plan, see blue dots and green line. 
}
    \label{fig:wifi-full}
\end{figure}

\section{Discussion}

We propose a method that extracts canonical data coordinates from scientific measurements. These approximate an embedding that is isometric to the latent manifold. We are assuming a specific, broadly applicable stochastic sampling strategy, and our proposed method corrects for unknown measurement device deformations. 
Our method constructs a representation that "whitens" (namely, changes to multivariate z-scores) groups of local neighborhoods, which we call ``bursts''. We impose additional constraints to patch together the locally whitened neighborhoods, ensuring a smooth global structure. Finally, the method is implemented using a neural network architecture, namely an encoder-decoder pair, which we name LOcal Conformal Autoencoder (LOCA).

 The method can be summarized as follows. 
 (i) We collect distorted neighborhoods of a fixed size of data samples.(ii) We embed/encode the data in the lowest dimensional Euclidean space so that these neighborhoods are standardized or z-scored.(iii) The data is decoded from embedding space to original measurements, enabling interpolation and extrapolation. (iv) LOCA is invariant to the measurement modality (approximately to second order, and modulo a rigid transformation). (v) Under scaling consistency for samples drawn from a Riemannian manifold, the encoder can approximate an isometric embedding of the manifold. 
 
 From an implementation perspective, our method is simpler than existing manifold learning methods, which typically require an eigen-decomposition of an $N$-by-$N$ matrix ($N$ being the number of samples). Indeed, existing implementations of deep neural networks enable a single developer to produce fast, reliable, GPU-based implementations of LOCA.
 
We provided solid empirical evidence that, if the deformation is invertible, then LOCA extracts an embedding that is isometric to the latent variables. Moreover, LOCA exhibits intriguing, indeed promising interpolation and extrapolation capabilities. To motivate the benefits of using LOCA, we used two potential applications. First, we apply a $3$-dimensional deformation to a printed object and demonstrate that LOCA manages to invert the deformation without using any assumptions on the object's structure. Finally, using Wi-Fi generated signals from multiple locations, we show that the LOCA embedding can be quantititatively correlated with the true locations of the received signals.

Our method relies on the ``bursts'' measurement model. As shown in Lemma \ref{lem:cov_jacobian}, the covariances of the bursts can be used to estimate the Jacobian of the unknown measurement function. Alternatively, we can replace this estimation with any other type of measurement strategy informative enough to estimate the local Jacobian of the measurement function.

\section*{Acknowledgements}

This work was partially supported by the DARPA PAI program (Agreement No. HR00111890032, Dr. T. Senator). This material
is based upon work supported by, or in part by, the U. S. Army Research Laboratory and the U. S. Army Research Office under contract/grant number W911NF1710306. E.P. has been partially supported by the Blavatnik Interdisciplinary Research Center (ICRC), the Federmann Research Center (Hebrew University) and Israeli Science Foundation research grant no. 1523/16.

\bibliography{main}
\bibliographystyle{ieeetr}

\begin{appendices}

\section{Proof of Lemma}
\label{sec:proof of lemma}
\label{sec:Supplementary Material}
\begin{proof}[Proof of lemma \ref{lem:cov_jacobian}]
For a sufficiently small $\sigma$ and any $\V{x}$ such that $\|\V{x}-\V{x}_i\|_2^2= O(\sigma^2)$ we can express $\V{g(x)}$ by 
$$ \V{g}(\V{x})=\V{g}(\V{x}_i)+\V{J_{g}}(\V{x}_i)(\V{x}-\V{x}_i)+O(\|\V{x}-\V{x}_i\|_2^2);$$ 
here we use the smoothness of $\V{g}$. 
Hence, we can express the covariance of the random variable $\V{Y}_i=\V{g}(\V{X}_i)$ by

\begin{eqnarray*}
 \V{C}(\V{Y}_i)&=& \mathbb{E} [\left(\V{Y}_i-\mathbb{E}[\V{Y}_i]\right)\left(\V{Y}_i-\mathbb{E}[\V{Y}_i]\right)^T ]\\
 &=& \mathbb{E} [\left(\V{Y}_i-\V{g}(\V{x}_i) +O(\sigma^2) \right)\cdot \\
 & &\left(\V{Y}_i-\V{g}(\V{x}_i) +O(\sigma^2)\right)^T ]\\
 &=&
 \V{J}_g (\V{x}_i) \mathbb{E} [ (\V{x} -\V{x}_i)(\V{x} -\V{x}_i)^T] \V{J}^T_g (\V{x}_i) + O(\sigma^4) \\
 &=& \sigma^2 \V{J}_{g} (\V{x}_i) \V{J}^T_{g} (\V{x}_i) + O(\sigma^4),
\end{eqnarray*}
where 
\begin{eqnarray*}
\mathbb{E}[\V{Y}_i]&=& \mathbb{E}[\V{g}(\V{X}_i)] \\
&=& \mathbb{E}[\V{g}(\V{x}_i) +\V{J_{g}} (\V{x}_i) (\V{X}_i-\V{x}_i) +(\|\V{X}_i-\V{x}_i\|_2^2)]\\
&=&\V{g}(\V{x}_i) +O(\sigma^2).
\end{eqnarray*}
\end{proof}
\section{Methods Description}
\label{sec:dm_adm_description}

Here we provide a description and implementation details for Diffusion maps (DM) \cite{Dmaps} and Anisotropic Diffusion maps (A-DM) \cite{nonlinearICA}.
\subsection{Diffusion maps}
\label{sec:dm_method}
Diffusion maps (DM) \cite{Dmaps} is a kernel based method for non linear dimensionality reduction. The method relies on a stochastic matrix built using a kernel $\V{K}: \mathcal{M}_Y \times \mathcal{M}_Y \rightarrow \mathbb{R}$. The stochastic matrix can be viewed as a fictitious random walk on the graph of the data. The reduced representation is obtained via an eigendecomposition of the stochastic matrix. The DM construction is summarized in the following steps:
\begin{enumerate}

\item Define a kernel function  \begin{math} {{\cal{K}} : \V{Y}\times{\V{Y}}\longrightarrow{\mathbb{R}}  }
\end{math}, such that $\V{K} \in {\mathbb{R}^{N \times N}}$ with elements $K_{i,j}={\cal{K}}(\V{y}_i,\V{y}_j)$, where $\V{K}$ is symmetric, positive semi-definite and non-negative.
Here, we focus on the common Radial Basis kernel defined for any $\V{y}_i$ and $\V{y}_j$ as
	\begin{equation}\label{eq:GKernel}
	{\mathcal{K}}(\V{y}_i,\V{y}_j)\defeq K_{i,j}=\exp\left( {-\frac{||\V{y}_i-\V{y}_j||^2}{2
			\epsilon}  }\right),i,j\in\{1\ldots N\},
	\end{equation}
	where $\epsilon$ is a kernel bandwidth (see more details below).

\item Row normalize $\V{K}$ 
    \begin{equation}
    \label{eq:DefP}
    \V{P}\defeq\V{D}^{-1}\V{K}\in\mathbb{R}^{N\times N},
    \end{equation}
  where the diagonal matrix $\V{D}\in\mathbb{R}^{N\times N}$ is defined as $D_{i,i}=\sum_jK_{i,j}$. $\V{P}$ can be interpreted as the matrix of transition probabilities of a Markov chain on $\V{Y}$, such that $\left[(\V{P})^t\right]_{i,j}\defeq p_t(\V{y}_i,\V{y}_j)$ (where $t$ is an integer power) describes the implied probability of transition from point $\V{y}_i$ to point $\V{y}_j$ in $t$ steps.

\item{Define the embedding for the dataset $\V{Y}$ by
	\begin{equation}\label{eq:Psi}{ \V{\Psi}^{t}: {(\V{y}_i)}:   \V{x}_i
		\longmapsto \begin{bmatrix} { \lambda_1^{t}\psi_1(i)} , {
			\lambda_2^{t}\psi_2(i)} , { \lambda_3^{t}\psi_3(i)} , {.} {.} {.}
		,
		
		{\lambda_{d}^{t}\psi_{d}(i)}\\
		
		\end{bmatrix}^T \in{\mathbb{R}^{d}} },
	\end{equation}

}
\end{enumerate}
where $\lambda_i$ and $\V{\psi}_i$ are the $i$-th eigenvalue and right eigenvector of the matrix $\V{P}$.

It is important to properly tune the kernel scale/bandwidth $\epsilon$, which determines the scale of connectivity of the kernel $\V{K}$. 
Several studies have suggested methods for optimizing $\epsilon$ in DM (e.g. \cite{Amit2,Keller,zelnik,epsilon}). 
Here, we use the max-min approach initially suggested in \cite{Keller} where the scale is set to
\begin{equation} \label{eq:MaxMin}
\epsilon_{\text{MaxMin}}= \underset{j}{\max} [ \underset{i,i\neq j}{\min} (||\V{y}_i-\V{y}_j||^2)],i,j=1,...N.
\end{equation} The max-min aims for a scale that ensures that all points are connected to at least one other point.
\subsection{Anisotropic Diffusion maps}
The Anisotropic Diffusion maps (A-DM) proposed in \cite{nonlinearICA} effectively replace the Euclidean distance in \eqref{eq:GKernel} by a (joint) local Mahalanobis distance. This local Mahalanobis distance between observed anchor points $\V{y}_i$ and $\V{y}_j$ is computed using the observed ``short bursts'' by
\begin{equation}
    \| \V{y}_i-\V{y}_j\|^2_M \defeq \frac{1}{2}(\V{y}_i-\V{y}_j)^T[\V{C}_i^{\dagger}+\V{C}_j^{\dagger}](\V{y}_i-\V{y}_j),
\end{equation}
where $\V{C}_i^{\dagger}$ is the generalized inverse of the sample covariance of the $i$-th observation burst.
This joint local Mahalanobis distance is used to compute the Anisotropic diffusion kernel 
\begin{equation}
    	{\widetilde{\mathcal{K}}}(\V{y}_i,\V{y}_j)\defeq \widetilde{K}_{i,j}=\exp\left( {-\frac{||\V{y}_i-\V{y}_j||^2_M}{2
			\epsilon}  }\right),i,j\in\{1\ldots N\};
			\end{equation}
the scaling parameter is again optimized using \eqref{eq:MaxMin} but now based on the Mahalanobis metric.
Next, we follow steps (2) and (3) in DM (see Section \ref{sec:dm_method}) and compute the A-DM embedding $\V{\phi}$ using the right eigenvectors of the normalized kernel. 

\section{Numerical Experiment Setting.}
\label{sec:details}
Here we describe the implementation details of each numerical experiment undertaken. The architectures we used are based on fully connected layers, with an activation function after each layer, except for the last two layers.
\begin{itemize}
\item First experiment details  (\ref{fig:mushroom_xy})
    \begin{itemize}
        \item Neural net architecture- \\
        Encoder- Neurons in each layer:  [50,50,2,2],  Activation function: tanh.\\
        Decoder- Neurons in each layer: [50,50,2,2], Activation function: tanh.
        \item Data- amount clouds: $N=2000$, cloud size: $M=200$, noise standard noise: $\sigma=1e-2$.
        \item Neural net training- batch size: 200 clouds, amount training clouds: 1800, amount validation clouds: 200.
    \end{itemize}
\item Second and third experiment details (\ref{fig:mushroom_oos},\ref{fig:mushroom_interpolation})
 \begin{itemize}
       \item Neural net architecture- \\
        Encoder- Neurons in each layer:  [50,50,2,2],  Activation function: tanh.\\
        Decoder- Neurons in each layer: [50,50,2,2], Activation function: tanh.
        \item Data- amount clouds: $N=2000$, cloud size: $M=200$, noise standard noise: $\sigma=1e-2$.
        \item Neural net training- batch size: 200 clouds, amount training clouds: 1800, amount validation clouds: 200.
\end{itemize}
\item Forth experiment details (\ref{fig:stereographical_data})
 \begin{itemize}
        \item Neural net architecture- \\
        Encoder- Neurons in each layer:  [100,100,3,3], Activation function: tanh.\\
        Decoder- Neurons in each layer:  [100,100,2,2], Activation function: leaky relu.
        \item Data- amount clouds: $N=546$, cloud size: $M=400$, noise standard noise: $\sigma=1e-2$. 
        
        The points on the sphere were generated using the "Fibonacci Sphere" mechanism with $800$ points. The points that did not satisfy constraints were left out.
        \item Neural net training- batch size: $50$ clouds, amount training clouds: $491$, amount validation clouds: $55$.
        \item Generation of a cloud around a given point on the unit sphere- Let $(\alpha,\beta)$ be its polar representation. We sample a cloud of points in the polar space using $\mathcal N_2((\pi/2,0), \sigma^2 \V{I}_2)$. Next, we find some orthogonal transformation that maps $(\pi/2,0)$ to $(\alpha,\beta)$ in the $3$-dimensional Cartesian space, and apply it to each sampled point.
\end{itemize}
\item Flattening a curved surface experiment details(\ref{sec:flatten_surface})
\begin{itemize}
        \item Neural net architecture- \\
        Encoder- Neurons in each layer:  [200,200,2,2], Activation function: tanh.\\
        Decoder- Neurons in each layer:  [200,200,2,2], Activation function: leaky relu.
        \item Data- amount clouds: $N=2500$, cloud size: $M=60$.
        
        \item Neural net training- batch size: $250$ clouds, amount training clouds: $2250$, amount validation clouds: $250$
        
\end{itemize}

\item Wi-Fi localization experiment details (\ref{sec:wifi})
\begin{itemize}
        \item Neural net architecture- \\
        Encoder- Neurons in each layer:  [200,200,3,3], Activation function: tanh.\\
        Decoder- Neurons in each layer:  [200,200,2,2], Activation function: leaky relu.
        \item Data- amount clouds: $N=4000$, cloud size: $M=6$.
        
        \item Neural net training- batch size: $200$ clouds, amount training clouds: $3600$, amount validation clouds: $400$.
        \item Generation of a cloud around a given point - sample the circle every $\pi/3$ starting at $0$.
\end{itemize}
\end{itemize}

The LOCA model was trained using an ADAM optimizer, that minimized at each epoch one of the two loss \eqref{eq:white_loss},\eqref{eq:recon_loss}. It was trained using $90\%$ of the given clouds, while the rest was defined as a validation set. The early stopping mechanism was implemented by evaluating the sum of the two losses every $100$ epochs. It saves the minimal value and the weights of the model that achieved this loss. The neural net terminates its training when the minimal loss was not changed in the last $2000$ epochs and loads the saved weights. By following this description we trained LOCA with the sequence of learning rates $10^{-3}$, then we fine tuned it with learning $3\cdot 10^{-4}$, and finished by training it with the learning rate $10^{-4}$.

\begin{figure*}
\centering
\subfigure[ ]{\label{fig:x_mush_appendix} \includegraphics[width=0.4\textwidth] {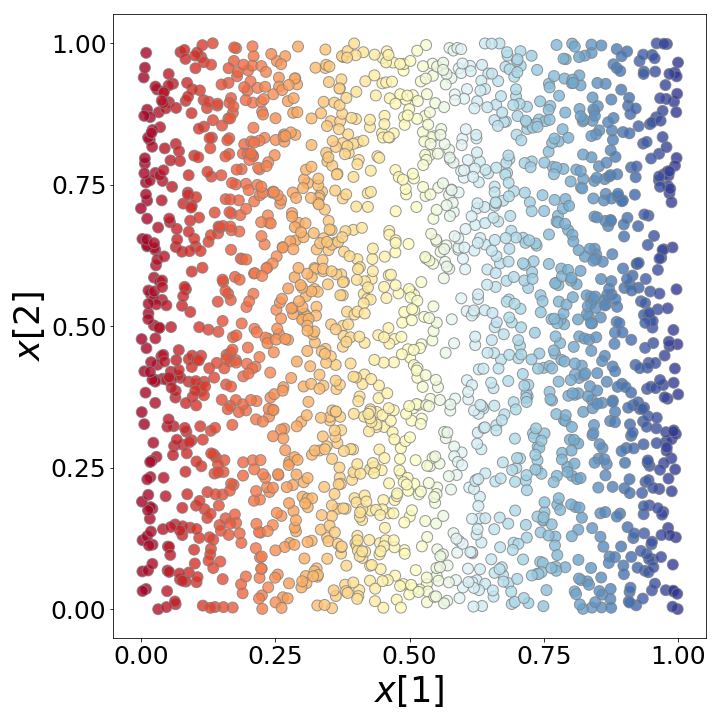}}
\subfigure[ ]{\label{fig:dm_mush} \includegraphics[width=0.4\textwidth] {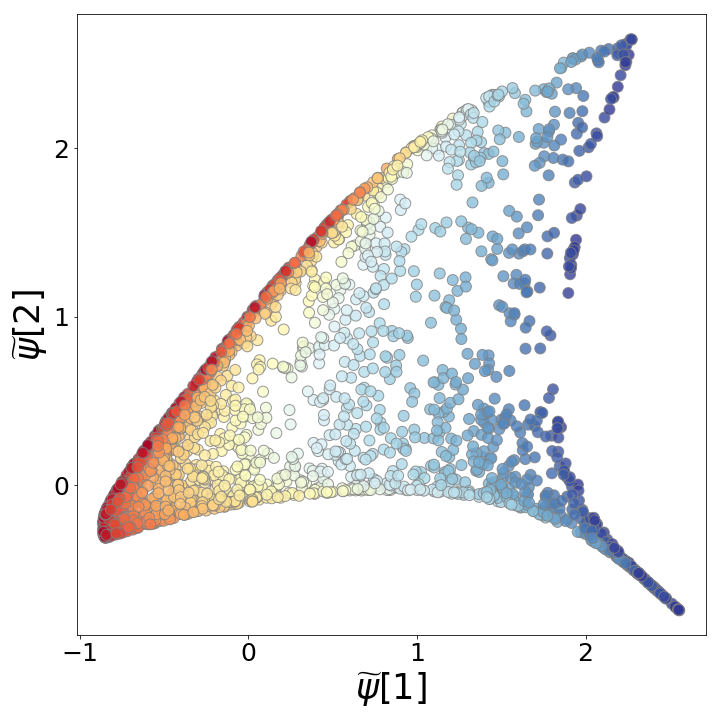}}
\subfigure[ ]{\label{fig:dm_mush_mahal} \includegraphics[width=0.4\textwidth] {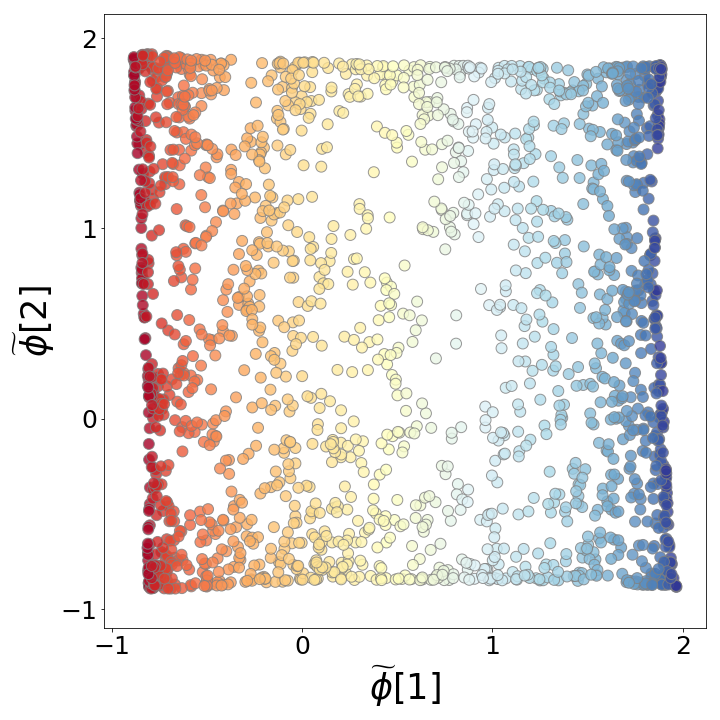}}
\subfigure[ ]{\label{fig:loca_mush} \includegraphics[width=0.4\textwidth] {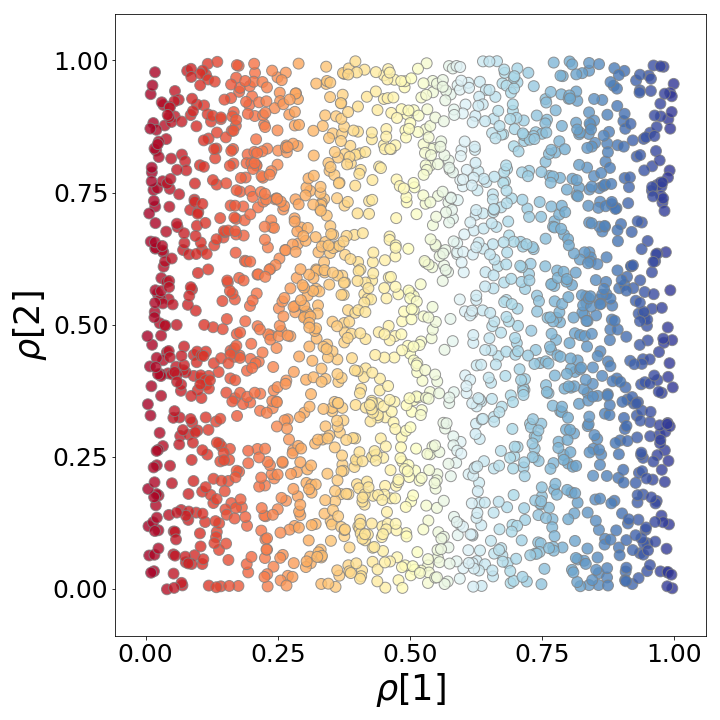}}
\newpage
\caption{Evaluating the isometric property of the proposed embedding (as described in Section \ref{sec:experimets_isometry}). \subref{fig:x_mush_appendix}- The latent representation of the data. 
\subref{fig:dm_mush}- The scaled calibrated embedding of DM. \subref{fig:dm_mush_mahal} The scaled calibrated embedding of A-DM. \subref{fig:loca_mush} The calibrated embedding of LOCA.
The color in both figures correspond to the values of $x[1]$ of the data.}
\end{figure*}
\newpage

\begin{figure*}
\centering
\subfigure[ ]{ \label{fig:spherical_x_appendix}
        \includegraphics[width=0.4\textwidth] {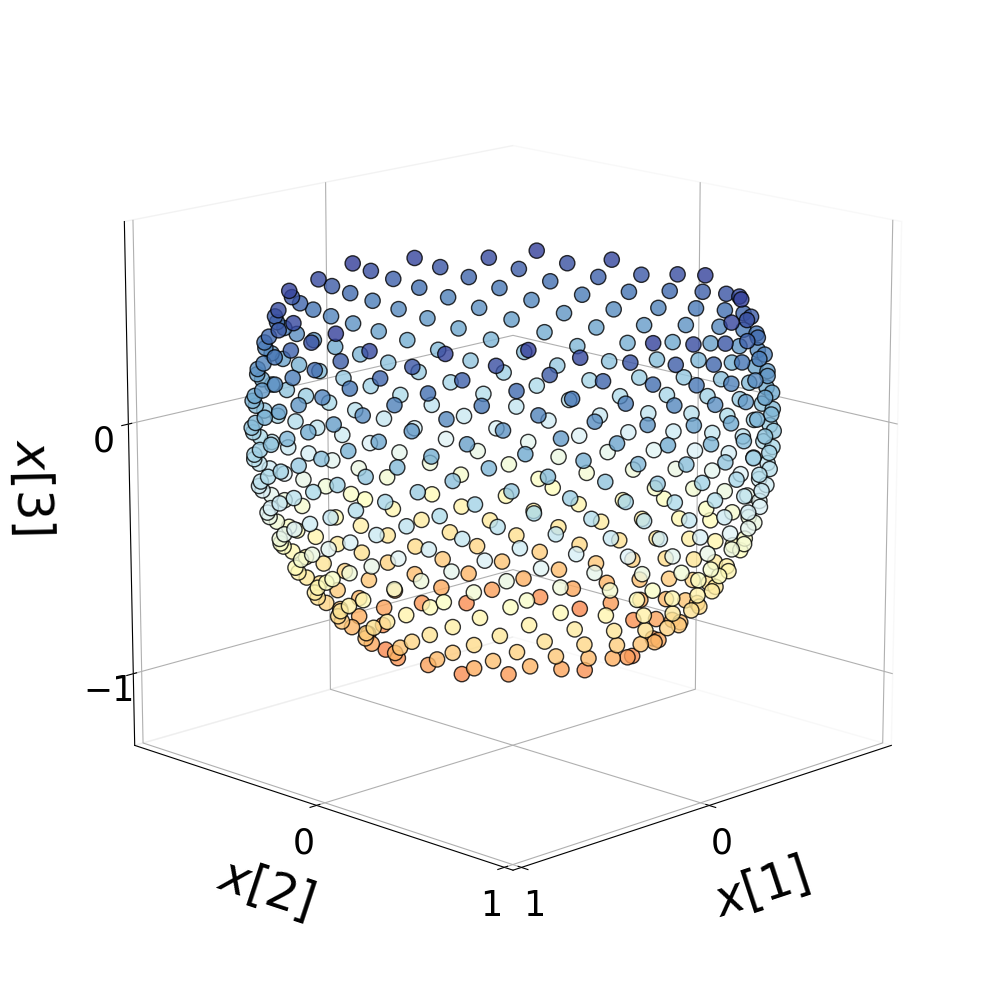}}
        \subfigure[ ]{ \label{fig:spherical_dm}
        \includegraphics[width=0.4\textwidth] {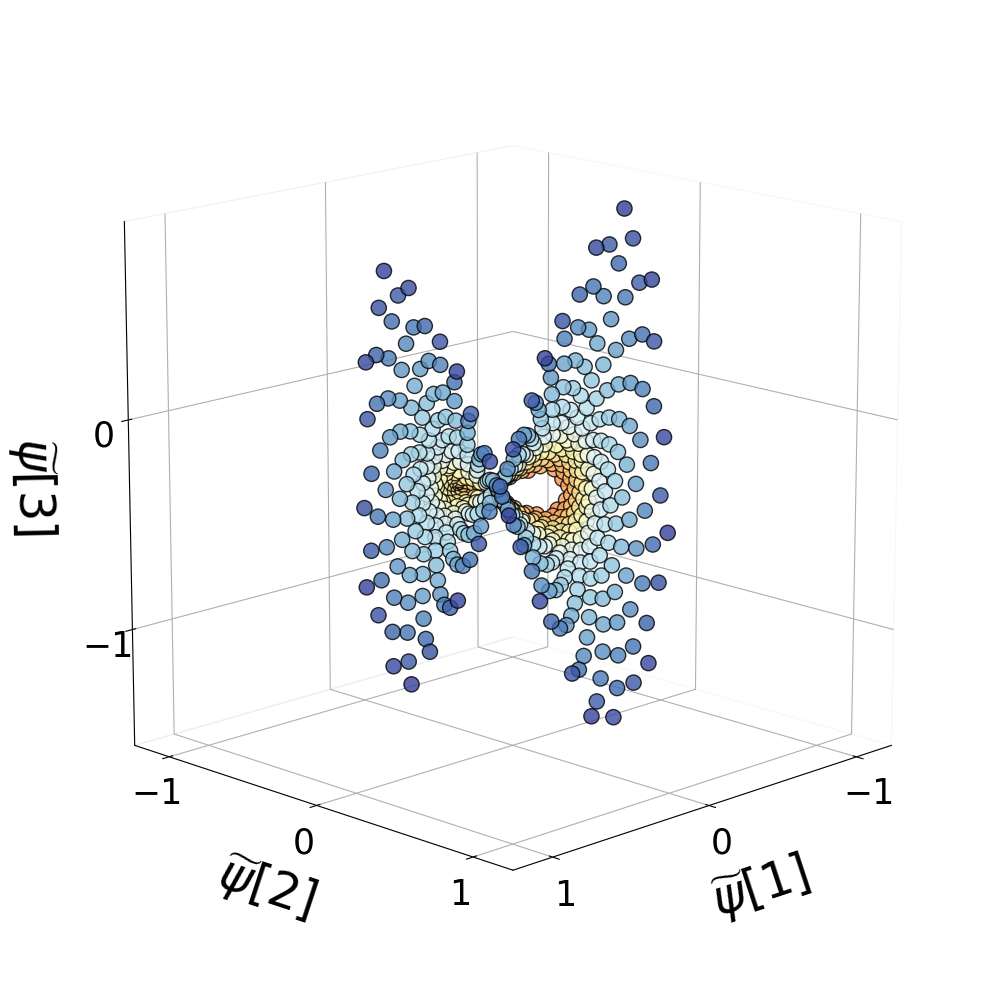}}
        \subfigure[ ]{ \label{fig:spherical_adm}
        \includegraphics[width=0.4\textwidth] {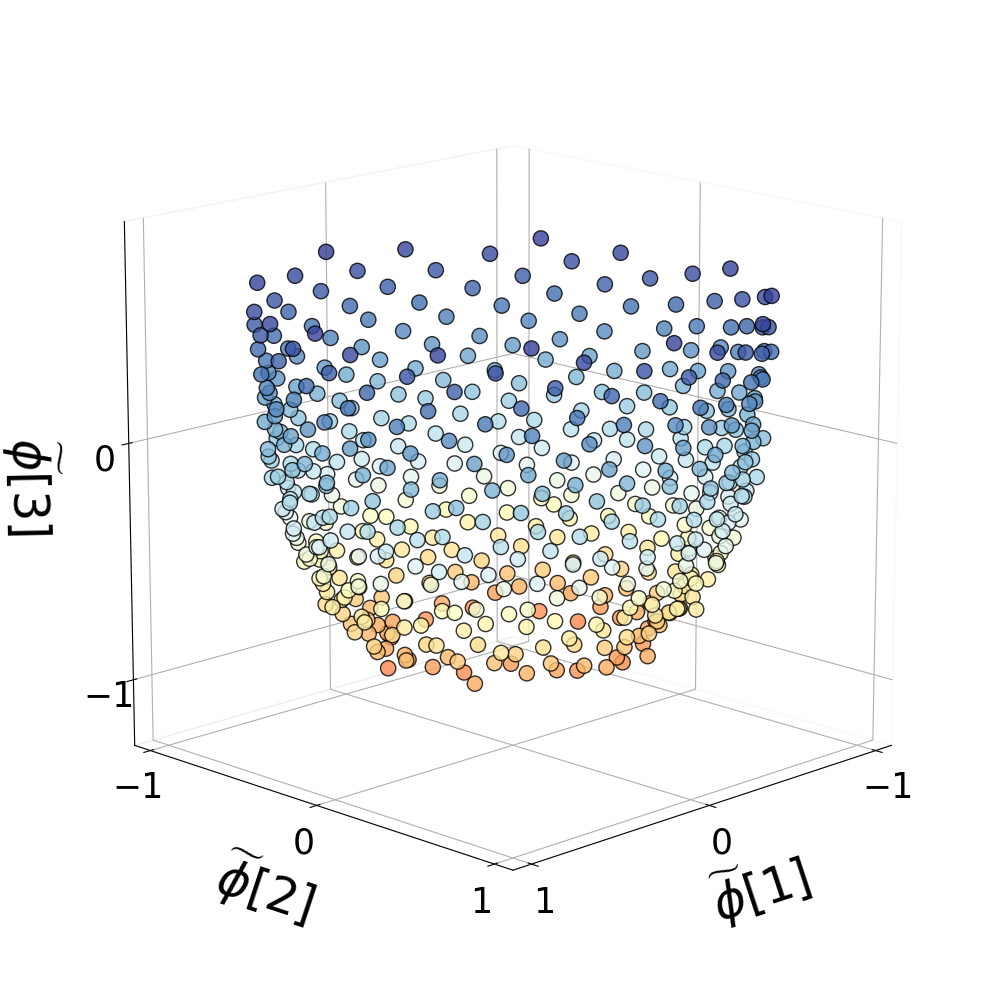}}
        \subfigure[ ]{ \label{fig:spherical_loca} 
        \includegraphics[width=0.4\textwidth] {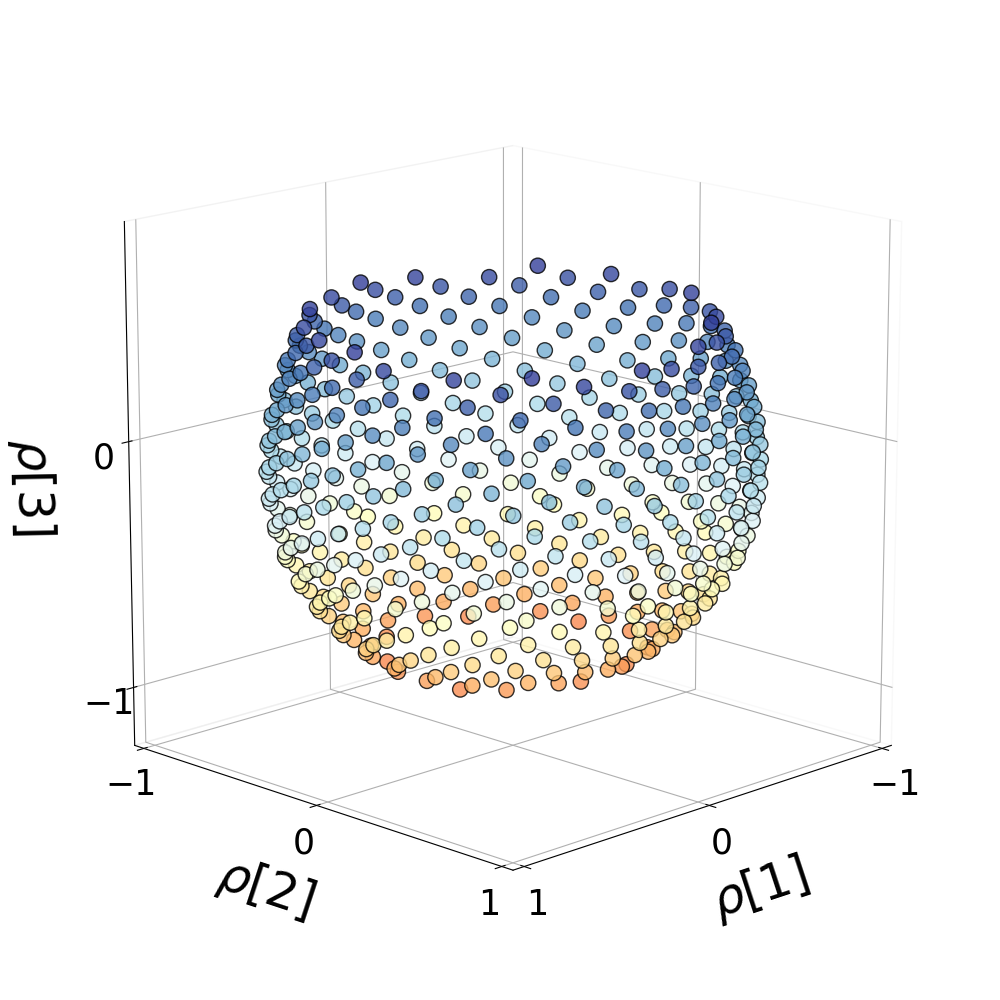}}
    \caption{Evaluating three dimensional embedding of the stereographic projection (described in \eqref{eq:stereographical_x}). We compare the latent representation of the data \subref{fig:spherical_x_appendix} with its embedding based of DM \subref{fig:spherical_dm}, A-DM \subref{fig:spherical_adm} and LOCA \subref{fig:spherical_loca}. 
    The colors used in \subref{fig:spherical_dm},\subref{fig:spherical_adm} and  \subref{fig:spherical_loca} is $\alpha$ defined in \eqref{eq:stereographical_x}}
    \label{fig:stereographical_embeddings}
\end{figure*}

\begin{figure*}
        \subfigure[ ]{ \includegraphics[width=.5\textwidth] {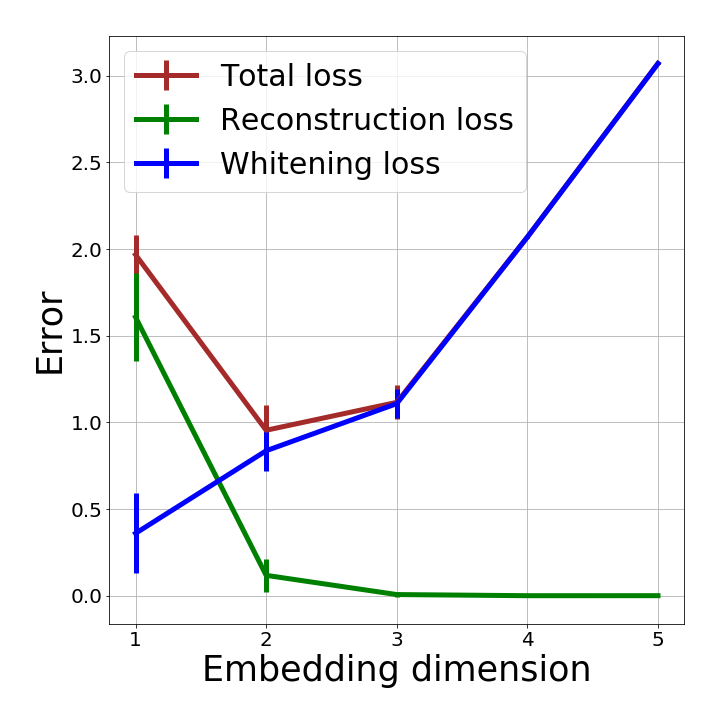}
        \label{fig:stereograhphical_differentDims_val}}
         \subfigure[ ]{ \includegraphics[width=.5\textwidth] {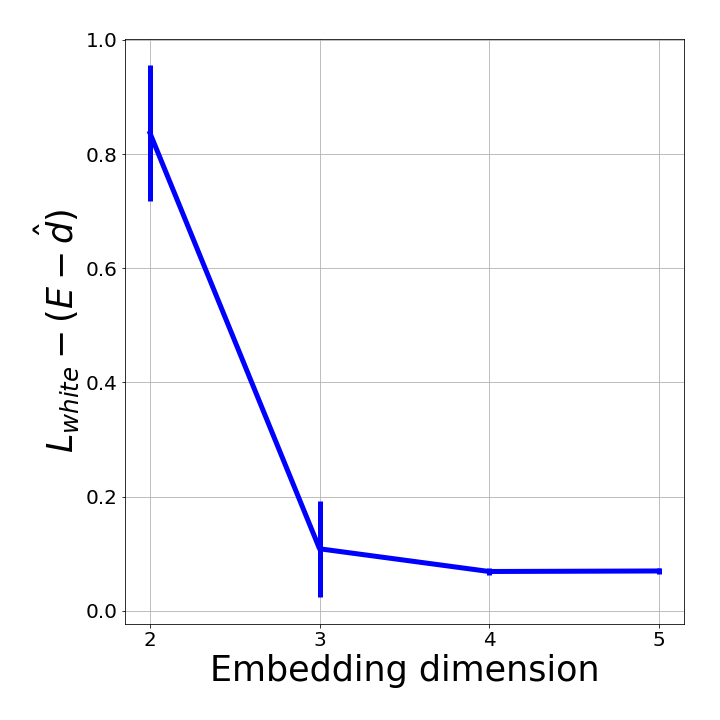}
        \label{fig:stereograhphical_differentDims_whiteAlgo}}
        \caption{A demonstration of the minimal embedding dimension estimation ($E$), based on the procedure described in Section \ref{sec:Loca_curved_manifold}. 
        \ref{fig:stereograhphical_differentDims_val}: Loss values on the validation set for embedding dimensions in the range $1,\ldots,5$. 
        \ref{fig:stereograhphical_differentDims_whiteAlgo}: Plot of suggested quantity for estimating the minimal embedding dimension. This plot suggests that $3$ coordinates are sufficient to represent the data using LOCA.
        }
        \label{fig:algo2_sterographical}
\end{figure*}
\end{appendices}

\end{document}